\newcommand{\B}{\begin{equation}}
\newcommand{\E}{\end{equation}}
\def\mc#1{\mathcal{#1}}
\def\mrm#1{\mathrm{#1}}
\def\mbi#1{\boldsymbol{#1}} 
\def\v#1{\mbi{#1}} 
\def\s#1{\mc{#1}} 
\def\lll{{\ell}}
\def\nll{\lll}
\def\uu{\mrm{u}_0}
\def\capping{\theta}
\def\<{\left\langle} 
\def\>{\right\rangle}
\def\reals{\mathbb{R}} 
\def\R{\mathbb{R}}
\def\half{\frac{1}{2}}
\newcommand{\simplex}{\Delta}
\newcommand{\numchange}{\kappa}
\newcommand{\eqdef}{\stackrel{\mbox{\tiny def}}{=}}
\title{The Maximum Entropy Relaxation Path}
\author{
\hspace{-0.3cm}
\name Moshe Dubiner \email moshe@google.com\\
\addr Google Inc. \\
1600 Amphitheater Parkway \\
Mountain View, CA 94043, USA
\AND
Matan Gavish \email gavish@stanford.edu\\
\addr Department of Statistics\\
Stanford University\\
Stanford, CA 94305, USA
\AND
\name Yoram Singer \email singer@google.com\\
\addr Google Inc. \\
1600 Amphitheater Parkway \\
Mountain View, CA 94043, USA}
\begin{document}

\maketitle

\begin{abstract}
The relaxed maximum entropy problem is concerned with finding a probability
distribution on a finite set that minimizes the relative entropy to a given
prior distribution, while satisfying relaxed max-norm constraints with
respect to a third observed multinomial distribution. We study the entire
relaxation path for this problem in detail. We show existence and a geometric
description of the relaxation path. Specifically, we show that the maximum
entropy relaxation path admits a planar geometric description as an increasing,
piecewise linear function in the inverse relaxation parameter.  We derive fast
algorithms for tracking the path. In various realistic settings, our algorithms
require $O(n\log(n))$ operations for probability distributions on $n$ points,
making it possible to handle large problems.  Once the path has been recovered,
we show that given a validation set, the family of admissible models is reduced
from an infinite family to a small, discrete set.  We demonstrate the merits of
our approach in experiments with synthetic data and discuss its potential for
the estimation of compact n-gram language models.
%
\end{abstract}
\newpage
\section{Introduction} \label{intro:sec}
This paper studies the set of solutions to the relaxed maximum entropy problem
\begin{eqnarray} \label{main:eqn}
   & \displaystyle \min_{\v{p}\,\in\simplex} &
    \sum_{j=1}^n p_j\log\!\left(\frac{p_j}{u_j}\right) \\
   & \hspace{1cm} \mbox{ s.t. } & \|\v{p}-\v{q}\|_\infty \leq 1/\nu \,,
\nonumber
\end{eqnarray}
where $\Delta$ is the probability simplex in $\R^n$, and where $ \v{q},\v{u}\in\simplex$ are
given. The solution set, indexed by the relaxation parameter $\nu\geq 0$,
is known as the {\em relaxation path} of \eqref{main:eqn}.

Numerous machine learning tasks are cast as an optimization problem, similar
to the form above, in which the objective decomposes into
an empirical risk function, and an added regularization term, which controls the model complexity. 
The tradeoff between risk and model complexity is determined by a scalar relaxation
parameter, which is often tuned by solving the optimization problem
for multiple relaxation parameter values, and using a validation set in order to choose the
most appropriate one. An alternative approach is to characterize the solution
for {\em any} possible relaxation parameter, effectively solving the problem for all
values simultaneously. This characterization is known in the literature as a
solution for the entire regularization (or relaxation) path.

Characterization of the entire relaxation path of specific problems has been the focus of a
relatively small number of research papers. Concretely, \cite{OsbornePrTu00}
and \cite{EfronHaJoTi04} provide two different characterizations of the entire
relaxation path of the Lasso~\citep{Tibshirani96b}. \cite{OsbornePrTu00b}
called the mapping between the space of regularization values to the set
of solutions {\em Homotopy}, a term that we adopt here. In the context of support
vector machines, \cite{PontilVe98} and \cite{HastieRoTiJi04} characterized the
relaxation path observed for SVM. \cite{RossetZh07} gave a general
characterization for losses which admit a linear relaxation path.
\cite{Rosset05} described an approximate characterization for the relaxation
path of logistic regression and related problems. \cite{ZhaoYu04} provided an
approximate characterization of the relaxation path for any convex loss
function with an additive $\ell_1$ regularization term. \cite{park2007l1}
provided an approximate characterization of the relaxation path for
generalized linear models. More recently, \cite{tibshirani2011solution}
characterized the relaxation path for a generalized Lasso problem where the
$\ell_1$ penalty is applied to a linear transformation of the Lasso variables.

In this paper we study a specific problem, in which the objective term is additively
separable. Throughout most of the paper we focus on the case where the
objective is the relative entropy between two multinomial distributions with a
max-norm constraint, known as the {\em relaxed maximum entropy} problem. The general
form of maximum entropy subject to relaxed constraints was studied
in~\citep{DudikPhSc07}. \citeauthor{DudikPhSc07} described a general account
for relaxed maximum entropy problems and derived corresponding generalization
bounds. For other related relaxation approaches see also Sec.~1.1
in~\citep{DudikPhSc07}.

The paper proceeds as follows.  In Sec.~\ref{setting:sec} we study the general
case of additively separable convex objective.  We show that in this generality,
the relaxation path exists and is given as the solution to an equation in the
inverse relaxation parameter.  In Sec.~\ref{munu:sec} we show that the maximum
entropy relaxation path admits a simple planar geometric characterization in the
inverse relaxation parameter, as an increasing, piecewise linear function.  In
Sec.~\ref{local_homotopy:sec} we build on the geometrical description and derive
a path tracking algorithm with worst-case time complexity  $O(n^3)$ for vectors of length $n$. Specializations of
the general algorithms are provided, which are able in realistic cases to
recover the path in time complexity $O(n\log(n))$, making it possible to handle
very large maximum entropy problems.  In Sec.~\ref{cv:sec} we
describe an efficient cross-validation procedure based on the entire path
solution:  by solving for the relaxation path, given
a validation set, we are able to reduce the family of admissible models, which
are considered for model selection, from an infinite family to a small, discrete
set.  In Sec.~\ref{eval:sec} we illustrate the merits of our approach in
experiments with synthetic data.  Sec.~\ref{ngram:sec} describes an application
of our approach for estimation of compact n-gram language models.  In
Sec.~\ref{extensions:sec} we describe extensions, including a different case
in which the relaxation path exists and can be tracked efficiently.  Finally, in
Appendix~\ref{global_homotopy:sec} we provide a more complicated path tracking
algorithm, with improved worst-case computational time complexity $O(n^2\log(n))$.

\section{The Relaxation Path: Basic Properties} \label{setting:sec}
\subsection{Notation} We denote vectors with bold face letter, e.g. $\v{v}$.
Sums are denoted by calligraphic letters, e.g. $\s{M}=\sum_j m_j$. We use the
shorthand $[n]$ to denote the set of integers $\{1,\dots,n\}$. The
inner-product between two vectors $\v{u}$ and $\v{v}$ is denoted,
$\v{u}\cdot\v{v}$. The {\em generalized} simplex with respect to a vector
$\v{m}$ whose components are positive is $\simplex(\v{m}) = \{\v{p} \, | \,
\v{p}\cdot\v{m} = 1 , \forall j: p_j\geq 0\}$.  Note that when for all $j$,
$m_j=1$, we get the standard definition of the simplex. Similarly, when for all
$j$, $m_j=\frac{1}{R}$, we retrieve the positive part of the $\ell_1$ ball of
radius $R$.  We call $\v{m}$ the multiplicity vector. As the name implies, its
role is to incorporate the case where there are repeated entries in $\v{p}$
which take the same values.  These repeated values are encoded by setting their
multiplicity accordingly. An addition rationale for allowing multiplicity
vectors is given in Sec.~\ref{extensions:sec}.

\subsection{General Solution Characterization}

Consider first the following generalized form of problem~\ref{main:eqn}:
\B \label{gen:eqn}
\min_{\v{p}\in\simplex(\v{m})} \phi(\v{p})  ~ \mbox{ s.t } ~
  \|\v{p}-\v{q}\|_\infty \leq 1 / \nu\,, 
\E
where 
$\v{q},\v{u}\in\simplex(\v{m})$ are given probability vectors,  and
$\phi$ is a strictly convex function. This form is a convenient
apparatus for describing the general properties of the solution.

Clearly, since for any $\nu\ge 0$ the objective function
of~(\ref{gen:eqn}) is a strictly convex function over a compact convex domain,
its optimum $\v{p}(\nu)$ exists. Further, it is unique and can be
viewed as a continuous vector function in~$\nu$.  Let us now further
characterize the form of the solution $\v{p}$. We can partition the set of
indices in $[n]$ into three disjoint sets depending on whether either of
the max-norm constraints $p_j-q_j\leq 1/\nu$ or $p_j-q_j\geq -1/\nu$ is
binding:
\begin{eqnarray}
  I_-(\nu) & = & \{1\le j\le n\ |\ p_j = q_j-1/\nu\} \nonumber \\
  I_0(\nu) & = & \{1\le j\le n\ |\ |p_j - q_j| < 1/\nu\} \label{Isets:eqn} \\
  I_+(\nu) & = & \{1\le j\le n\ |\ p_j = q_j+1/\nu\} \nonumber  ~ ~ .
\end{eqnarray}
An alternative form of representing the partition $(I_-,I_0,I_+)$ is
obtained by associating an indicator value with each coordinate resulting
in a vector $\v{s}\in\{-1,0,1\}^n$ where
$$
  s_j = \left\{\hspace{-0.1cm}\begin{array}{rl}
  -1 & j\in I_-\\
   0 & j\in I_0\\
  +1 & j\in I_+
 \end{array} \right. ~ ~ .
$$
We make use of both notations.
Our goal is to devise an algorithmic infrastructure that lets us
reveal the correct partition without examining all of the $3^n$ possible
partitions. The following
characterizes the gradient of solution in terms of the partition
$(I_-,I_0,I_+)$.
\begin{lemma} \label{gen2:thm}
Let $\v{p}\in\simplex(\v{m})$, $\phi$ be a strictly convex function, and
$\nu>0$. Assume that $\phi$ is differentiable over $\simplex(\v{m})$, and that
its optimum has no zero coordinates.  Let $\partial_j\phi = \frac{\partial
\phi}{\partial p_j}$ denote the $j$-th coordinate of the gradient of $\phi$.
Then, $\v{p}$ minimizes (\ref{gen:eqn}) iff there exists $-\infty<\eta<\infty$
such that for any $1\le j\le n$ exactly one of the following three conditions
holds,
\begin{eqnarray}
  \partial_j\phi(\v{p}) \ge \eta\,m_j & \mbox{if} & p_j=q_j-1/\nu \label{c4}\\
  \partial_j\phi(\v{p})  =  \eta\,m_j & \mbox{if} & |p_j-q_j|<1/\nu \label{c5}\\
  \partial_j\phi(\v{p}) \le \eta\,m_j & \mbox{if} & p_j=q_j+1/\nu \label{c6}
  ~ ~ .
\end{eqnarray}
\end{lemma}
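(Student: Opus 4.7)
The statement is the KKT characterization for a convex program with a linear equality constraint (the generalized simplex $\v{p}\cdot\v{m}=1$), box inequality constraints ($q_j-1/\nu\le p_j\le q_j+1/\nu$), and nonnegativity constraints ($p_j\ge 0$). My plan is to treat it as a direct application of the KKT theorem, using the strict convexity of $\phi$ for the sufficiency direction.

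First I would verify a constraint qualification. For any $\nu>0$, the point $\v{q}$ itself lies strictly inside every box inequality and satisfies $\v{q}\cdot\v{m}=1$ with all coordinates positive, so Slater's condition holds. Together with differentiability of $\phi$ on $\simplex(\v{m})$, this guarantees that any optimizer $\v{p}$ admits Lagrange multipliers $\eta\in\mathbb{R}$ for the equality constraint, $\alpha_j,\beta_j\ge 0$ for the upper and lower box constraints, and $\gamma_j\ge 0$ for the nonnegativity constraints. Stationarity of the Lagrangian in $p_j$ gives
\[
\partial_j\phi(\v{p}) \;=\; \eta\, m_j \;-\;\alpha_j \;+\;\beta_j \;+\;\gamma_j.
\]

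Next I would apply complementary slackness. The hypothesis that the optimizer has no zero coordinates forces $\gamma_j=0$ for all $j$. For the box constraints, at most one of the two bounds is active at each $j$. If $p_j = q_j - 1/\nu$ then $\alpha_j = 0$ and $\beta_j \ge 0$, so $\partial_j\phi(\v{p}) = \eta m_j + \beta_j \ge \eta m_j$, which is (\ref{c4}). If $p_j = q_j + 1/\nu$ then $\beta_j = 0$ and $\alpha_j \ge 0$, yielding $\partial_j\phi(\v{p}) \le \eta m_j$, which is (\ref{c6}). If $|p_j-q_j|<1/\nu$ then both $\alpha_j=\beta_j=0$ and we obtain the equality in (\ref{c5}). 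This proves the ``only if'' direction.

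For sufficiency, suppose $\v{p}\in\simplex(\v{m})$ is feasible and some $\eta$ produces the three conditions. Define candidate multipliers $\alpha_j,\beta_j\ge 0$ by inverting the stationarity equation in each of the three cases (taking $\alpha_j=\eta m_j-\partial_j\phi$ when $p_j=q_j+1/\nu$, $\beta_j=\partial_j\phi-\eta m_j$ when $p_j=q_j-1/\nu$, and both zero otherwise); this is consistent with complementary slackness by construction. Hence $(\v{p},\eta,\boldsymbol{\alpha},\boldsymbol{\beta})$ is a KKT point of a convex program, and KKT conditions are sufficient for optimality in the convex setting, with uniqueness following from strict convexity of $\phi$.

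I do not expect a real obstacle: the statement is essentially the KKT theorem in disguise. The only item worth being careful about is that $\eta$ is a Lagrange multiplier for an equality constraint and is therefore unconstrained in sign; the ``there exists $-\infty<\eta<\infty$'' clause is simply recording this, and its actual value is pinned down implicitly by the normalization $\v{p}\cdot\v{m}=1$. The only mild subtlety is the assumption that the optimum has no zero coordinates, which is what allows us to drop the nonnegativity multipliers and obtain the clean three-case dichotomy stated in the lemma.
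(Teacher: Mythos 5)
Your proposal is correct and follows essentially the same route as the paper's proof: both introduce Lagrange multipliers for the box constraints and the simplex equality constraint, observe that the positivity multipliers vanish because the optimum is strictly positive, and then read off the three cases from stationarity plus complementary slackness. You are somewhat more careful than the paper in two places the paper treats tacitly — you explicitly verify Slater's condition via the feasible point $\v{q}$, and you spell out the sufficiency direction by constructing the multipliers and invoking convexity — but this is filling in detail rather than taking a different path.
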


\begin{proof}
We prove the lemma by using of the complementary slackness conditions for
optimality. For brevity we assume throughout the rest of the proof that
$m_j=1$ for all $j\in[n]$. We associate a Lagrange multiplier
$\alpha_j^+\geq 0$ for the constraint $p_j - q_j \leq 1/\nu$ and
$\alpha_j^-\geq 0$ for the constraint $p_j - q_j \geq -1/\nu$. We use $\eta$
to denote the Lagrange multiplier for the simplex constraint $\sum_j p_j=1$.
Since we assumed that the solution is strictly positive, we know that the
Lagrange multipliers corresponding to the positivity constraints of $\v{p}$
would be all zero at min-max saddle point of the Lagrangian. Hence, we get the
following Lagrangian,
$$
{\cal L} = \phi(\v{p}) +
   \sum_{j=1}^m \alpha_j^+ (p_j - q_j - 1/\nu) -
   \sum_{j=1}^m \alpha_j^- (p_j - q_j + 1/\nu) -
   \eta \left(\sum_{j=1}^n p_j - 1\right) ~ ~ .
$$
From the necessary condition for optimality we know that
$\partial {\cal L}/\partial p_j = 0$ for all $j\in[n]$. Therefore, we get
that for all indices $j$ the following holds at the optimum,
\begin{equation} \label{lagrange_zero:eqn}
\partial_j\phi(\v{p}) + \alpha_j^+ - \alpha_j^- + \eta = 0 ~ ~ .
\end{equation}
We now need to examine three cases, depending on the relation between $p_j$
and $q_j$. First, when $|p_j-q_j|<1/\nu$, neither of the inequality
constraints is binding. Therefore, the complementary slackness conditions
imply that at the saddle point $\alpha_j^+(p_j-q_j-1/\nu)=0$ and
$\alpha_j^-(p_j-q_j+1/\nu)=0$. We thus must have $\alpha_j^+=\alpha_j^-=0$
at the optimum. In this case, $\partial_j\phi(\v{p})=\eta$ which is
case~(\ref{c5}) in the theorem statement. Next, if $p_j-q_j=1/\eta$, then
$\alpha_j^-=0$ while $\alpha_j^+\geq 0$, using again the complementary
slackness conditions for optimality. We therefore get that
$\partial_j\phi(\v{p})=\eta-\alpha_j^+\leq\eta$ which is case~(\ref{c6})
of the problem statement. The case $p_j-q_j=-1/\nu$ is derived analogously.
Finally, when $\v{m}\neq\v{1}$ we modify the simplex
constraint accordingly and replace $\alpha_j^{\pm}$ with $m_j\alpha_j^{\pm}$.
\end{proof}

The above lemma provides us with a simple certificate for the optimality of
a vector $\v{p}$, given $I_-,I_0,I_+$ and $\nu$. As we will see the partition
tends to remain intact as $\nu$ varies. This makes it possible to track
the solution path $\v{p}(\nu)$ as $\nu$ gradually increases. This approach is
known as the Homotopy method~\citep{OsbornePrTu00b}.

\subsection{Relaxation Path for Additively Separable Convex Objectives}
\label{separable_homotopy:sec}
We now arrive at the notion of a relaxation path. It is convenient to keep
the discussion general before narrowing down to our main subject, the relaxed
maximal entropy problem.  We restrict the objective $\phi(\cdot)$ from
\eqref{gen:eqn} to the case of an additively separable function,
\B \label{separable_obj:eqn}
\phi(\v{p}) = \sum_{j=1}^n m_j \, \phi_j(p_j) ~ ~ .
\E
We further constrain each function
$\phi_j:\reals\rightarrow\reals$ and assume that it is a strictly convex
and continuously differentiable over $(0,1)$. 

\paragraph{Existence of the relaxation path.}
Under these conditions,
we now show that there exists a function $\nu \mapsto \eta(\nu)$ 
defined over an interval $[0,\nu_\infty]$, given implicitly as the unique
solution to equation \eqref{hs2:eqn} below, which completely determines
the solution 
$\v{p}(\nu)$ to \eqref{gen:eqn}, through the relation
$$
   p_j(\nu) = q_j + \frac{1}{\nu}\,\capping\!\left(\nu\psi_j(\eta(\nu))-\nu q_j\right)
$$
where the functions $\theta(\cdot)$ and $\psi_j(\cdot)$ are given explicitly 
in the sequel. Thus, the term ``path'' for $\nu\mapsto\eta(\nu)$ is justified,
since knowledge of $\eta(\nu)$ allows us to determine $\v{p}(\nu)$ 
instantly.  We now unravel the structure of $\theta(\cdot)$ and
$\psi_j(\cdot)$. 

Given the assumption on the functions $\phi_j$, $\frac{{\rm d}\phi_j(p_j)}{{\rm d}p_j}$ is a monotonically increasing
continuous function. Let $0\le\psi_j\le 1$ denote its inverse function, i.e.,
$$ \psi_j\left(\frac{{\rm d}\phi_j(p_j)}{{\rm d}p_j}\right) = p_j
 ~ \mbox{ for } ~ 0<p_j<1 ~ ~ . $$
We can now invoke Lemma \ref{gen2:thm} and characterize the correct partition
$(I_-, I_0, I_+)$ in terms of the inverse function $\psi$ as follows:
\begin{eqnarray*}
  p_j \ge \psi_j(\eta) & j\in I_- \\
  p_j = \psi_j(\eta)   & j\in I_0 \\
  p_j \le \psi_j(\eta) & j\in I_+
\end{eqnarray*}
which implies that
\begin{equation} \label{ps:eqn}
p_j=\left\{\begin{array}{lll}q_j-1/\nu &\ \ \psi_j(\eta)\le q_j - 1/\nu
  \\ \psi_j(\eta) &\ \ |\psi_j(\eta) - q_j|<1/\nu \\
  q_j+1/\nu &\ \ \psi_j(\eta)\ge q_j + 1/\nu \end{array}\right. ~ ~ .
\end{equation}
\begin{figure}[t] \label{capping:fig}
  \begin{center}
  \centerline{\psfig{figure=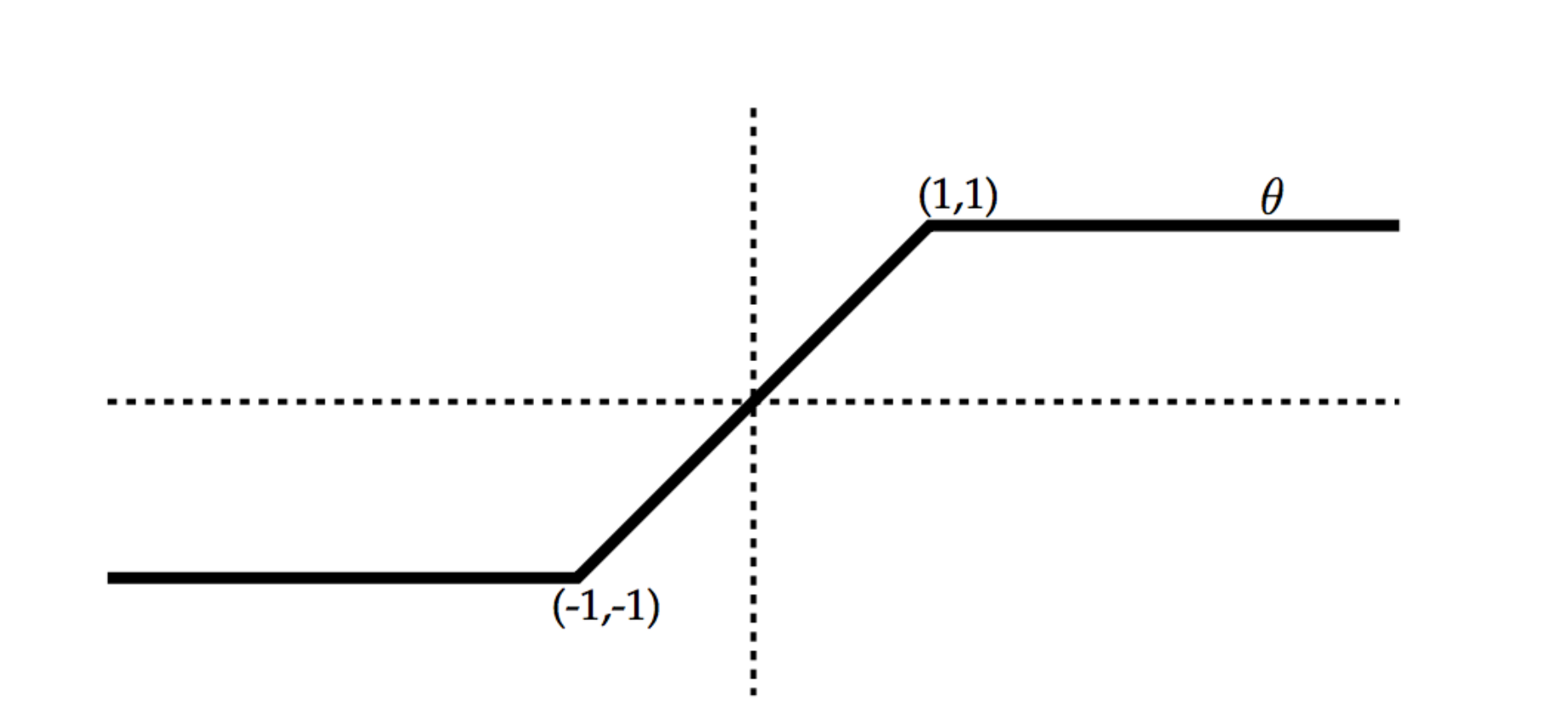,height=5cm}}
  \end{center}
\caption{The capping function $\capping$.}
\end{figure}
Denote by $\capping(\cdot)$ the capping function
$$ \capping(x)=\max\left\{-1, \min\left\{1, x\right\}\right\} ~ ~ . $$
An illustration of the capping function is given in Fig.~1.
Equipped with this definition we can rewrite (\ref{ps:eqn}) as follows
\B
  \forall j\in[n] : \qquad
   p_j = q_j + \frac{1}{\nu}\,\capping\!\left(\nu\psi_j(\eta)-\nu q_j\right)
   ~ ~ . \label{ps2:eqn}
\E
The solution is thus completely determined by a single unknown parameter
$\eta$, which depends on the problem variables and on $\nu$.
We now use the fact that that $\v{q}$ and $\v{p}$ are in $\simplex(\v{m})$,
sum \eqref{ps2:eqn} over $j$, and get
$$
\underbrace{\sum_{j=1}^n m_j p_j}_{=1} =
\underbrace{\sum_{j=1}^n m_j q_j}_{=1} +
	\frac{1}{\nu} \sum_{j=1}^n \,
	    m_j \capping\!\left(\nu\psi_j(\eta)- \nu q_j\right) ~  .
$$
The above equality leads to the following implicit equation for 
determining $\eta$
\B
G(\nu,\eta) \eqdef
\sum_{j=1}^{n} m_j \, \capping\!\left(\nu\psi_j(\eta)- \nu q_j\right)=0 ~ .
\label{hs2:eqn} \E

The correct value for $\eta$ is obtained by finding a zero of $G(\nu,\cdot)$.
The solution of $G(\nu,\eta)=0$ would give rise to the unique optimal
$\v{p}$ through (\ref{ps2:eqn}). Since the continuous function
$G(\nu,\eta)$ is monotonically non-decreasing from $G(\nu,-\infty)\le 0$ to
$G(\nu,\infty)\ge 0$, a solution (w.r.t. $\eta$) to the equation
$G(\nu,\eta)=0$ exists.
Moreover, the functions $\psi_j(\eta)$ are monotonically increasing, thus
the only setting in which we may obtain multiple solutions occurs when the
capping function is in its ``flat'' region. This can happen only if the
set $I_0$ is empty, thus $p_j = q_j \pm 1/\nu$. Let
\B
\nu_{\infty}\eqdef\inf\left\{ \nu\geq 0 \, \big| \, I_0(\nu) = \emptyset  \right\}
\,.
\label{nuinf:eqn}
\E
If the set $I_0$ is never empty for all finite values of $\nu$ we denote
$\nu_\infty=\infty$. Similarly, we denote by $\eta_\infty$ the solution
of $G(\nu_\infty,\eta)=0$ and define $\eta_\infty=\infty$ if
$\nu_\infty=\infty$. Note that for any $\nu\ge\nu_\infty$
$G(\nu,\eta_\infty)=0$, hence for $\nu\ge\nu_\infty$, $I_0(\nu)$ remains empty,
$I_-(\nu)=I_-(\nu_\infty)$, and $I_+(\nu)=I_+(\nu_\infty)$. We have thus
characterized the form of the solution $0\le\nu\le\nu_\infty$ through the
equation $G(\nu,\eta)=0$ which attains a unique zero at $\eta(\nu)$.
Increasing $\nu$ beyond $\nu_\infty$ does not change the form of the
solution (in terms of the partition into $I_+,I_-,I_0$) hence we can confine
the description of the relaxation path for $\nu$ to the interval
$(0,\nu_\infty]$. In summary, the path $\eta(\nu)$ exists for any separable
objective.

\section{Geometry of the Relaxation Path} \label{munu:sec}
The previous section provided an abstract characterization of the relaxation
path through the equation $G(\nu,\eta)=0$. While the path can in principle
be recovered for any individual value of $\nu$ by solving the equation
$G(\nu,\eta)=0$, we are interested in a computational feasible method for
finding it entirely. This task may not be possible for general additively
separable $\phi$.  Our main setting, the relaxed maximum entropy problem, is
an example where the relaxation path $\nu\mapsto \eta(\nu)$ admits a simple
geometric description.  If Sec. \ref{extensions:sec} we discuss an additional
case where the relaxation path admits a geometric description and a
corresponding tracking scheme.

In the relaxed maximum entropy, the objective $\phi(\v{p})$
is the relative entropy between the distribution
$\v{p}\in\simplex(\v{m})$ and a {\em known} distribution
$\v{u}\in\simplex(\v{m})$,
$$
\phi(\v{p}) = \sum_{j=1}^n m_j \phi_j(p_j) =
\sum_{j=1}^n m_j\,p_j\log\!\left(\frac{p_j}{u_j}\right) ~ ~.
$$
The optimization problem is then 
\begin{eqnarray} \label{relmaxent:eqn}
	& \displaystyle \min_{\v{p}\,\in\simplex(\v{m})} &
    \sum_{j=1}^n m_j \, p_j\log\!\left(\frac{p_j}{u_j}\right) \\
		& \hspace{1cm} \mbox{ s.t. } & \|\v{p}-\v{q}\|_\infty \leq
                1/\nu \nonumber ~.
\end{eqnarray}
We refer to $\v{u}$ as the {\em prior} distribution and to
$\v{q}$ as the {\em observed} distribution. (The term ``prior distribution''
 is not used here in any Bayesian context.) It is assumed that $u_j>0$ for
 $j=1,\ldots ,n$. It is useful to examine the dual problem of
 \eqref{relmaxent:eqn}, which can be shown to be
\B \label{maxent_dual:eqn}
-\min_{\v{\alpha}} \; \left\{
	\log\left(
	\sum_{j=1}^n m_j \, u_j \, {\rm e}^{\alpha_j}
	\right) \; - \;
  \sum_{j=1}^n m_j \, q_j \, \alpha_j \; + \;
	\frac{1}{\nu} \sum_{j=1}^n m_j\,|{\alpha}_j| \right\}
	~ ~ .
\E
Let $Z$ denote the sum
	$\sum_{j=1}^n m_j \, u_j \, {\rm e}^{\alpha_j}$.
Given a solution for the dual problem, the primal solution $\v{p}$ can be
reconstructed from $\v{\alpha}$ as follows
\B \label{psol:eqn}
p_j = \frac{u_j\,e^{\alpha_j}}{Z} ~ ~ .
\E
Moreover, we can rewrite the dual objective in a mixed form using
$\v{p}(\v{\alpha})$ as
$$
-\sum_{j=1}^n m_j q_j\log\left(p_j(\v{\alpha})\right) +
\frac{1}{\nu}\,\sum_{j=1}^n m_j |\alpha_j| ~ .
$$
When $\v{m}=\v{1} $, the dual form amounts to finding an exponential tilt of the
multinomial distribution $\v{u}$, with an $\ell_1$ penalty on the
exponential tilt coefficients. Adding a constant term, the dual objective
can be written in a mixed form as 
$$
D_{\text{KL}}(\v{q}\,||\,\v{p}(\v{\alpha})) + \frac{1}{\nu}\|\v{\alpha}\|_1 \,.
$$  
As the $\ell_1$ penalty tends to promote sparse solutions, the primal
problem can be interpreted as the task of finding a {\em sparse exponential
tilt}, namely, an exponential tilt $\v{p}$ of
the prior distribution $\v{u}$, in which $p_i\propto u_i$ for most $1\leq i\leq
n$, and which is close (in $D_{KL}$) to the observed
distribution $\v{q}$.

Building on these insights, we now turn to a geometric description of the
relaxation path in this case.
Our first step is to re-parameterize the problem by introducing a parameter
$\mu=\nu\,{\rm e}^{\eta+1}$.
In terms of $\mu$, equations \eqref{ps2:eqn} and
\eqref{hs2:eqn} amount to
\begin{eqnarray}
p_j & = & q_j + \frac{1}{\nu}\,\capping\!\left(\mu u_j-\nu q_j\right)
 ~ ~ , \label{psolcap:eqn}  \\
G(\nu,\mu) & = & \sum_{j=1}^{n} m_j\,\capping\!\left(\mu u_j-
 \nu q_j\right)=0 ~ ~ . \label{hs3:eqn}
\end{eqnarray}
Before proceeding, we would like to point to an aesthetic symmetry.
Note that $(\mu,\v{u})$ are interchangeable with $(\nu,\v{q})$. We can thus
swap the roles of the prior distribution with the observed distribution
and obtain an analogous characterization.

In order to explore the dependency of $\mu$ on $\nu$ let us introduce the
following sums
\begin{equation} \label{mnsets:eqn}
\s{M} = \sum_{j\in I_+}m_j - \sum_{j\in I_-}m_j ~ ~ , ~ ~
\s{U} = \sum_{j\in I_0} m_j u_j ~ ~ , ~ ~
\s{Q} = \sum_{j\in I_0} m_j q_j ~ ~ .
\end{equation}
Fixing $\nu$ while using (\ref{mnsets:eqn}), we can rewrite (\ref{hs3:eqn})
as 
\B \label{mnlin:eqn} \mu\,\s{U} - \nu \s{Q} + \s{M} = 0 ~ ~ .\E
Clearly, so long as the partition of $[n]$ into the sets $I_{+},I_{-},I_{0}$
is intact, there is a simple linear relation between $\mu$ and $\nu$.
The number of possible subsets $I_-,I_0,I_+$ is finite. Thus, the range
$0<\nu<\infty$ decomposes into a finite number of intervals each of which
corresponds to a fixed partition of $[n]$ into $I_{+},I_{-},I_{0}$.  Therefore,
in each
interval where $I_0$ is not empty, $\mu$ is a linear function of $\nu$.
Finally, recall that $I_0$ is not empty for $\nu < \nu_\infty$, where
$\nu_\infty$ is given by \eqref{nuinf:eqn}, and empty for $ \nu \geq
\nu_\infty$.

To recap our derivation, the following lemma characterizes of the solution of
$G(\nu,\cdot)$. We denote the relaxation path for $\mu$ with respect
to $\nu$ by $\mu(\nu)$.
\begin{lemma} \label{munu:lemma}
  For $0\le\nu\le\nu_\infty$, the value of $\mu$ as defined by
  (\ref{hs3:eqn}) is unique. Further, the function $\mu(\nu)$ is
  a piecewise linear continuous function in $\nu$. Increasing $\nu$
  beyond $\nu_\infty$ does not change $\v{p}$.
\end{lemma}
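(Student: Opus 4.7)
The plan is to build on the two ingredients already assembled in Section~\ref{munu:sec}: the implicit zero equation (\ref{hs3:eqn}) characterizing $\mu$, and the algebraic identity (\ref{mnlin:eqn}) that reduces that equation to a linear relation whenever the partition $(I_-,I_0,I_+)$ is frozen. I would treat the three assertions of the lemma in turn.

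For uniqueness of $\mu$ on $[0,\nu_\infty]$, I would observe that $\mu\mapsto G(\nu,\mu)$ is monotone non-decreasing, since every $u_j>0$ and the cap function is non-decreasing. For $\nu<\nu_\infty$, the definition (\ref{nuinf:eqn}) guarantees $I_0(\nu)\ne\emptyset$, so at least one index $j_0$ has its cap argument $\mu u_{j_0}-\nu q_{j_0}$ strictly inside $(-1,1)$. On that index the cap is strictly monotone, so $G(\nu,\cdot)$ is strictly increasing in a neighbourhood of any zero, and uniqueness follows. Uniqueness at the endpoint $\nu=\nu_\infty$ is then inherited via a limiting argument invoking the continuity of $\v{p}(\nu)$ established in Section~\ref{setting:sec}.

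For the piecewise linear continuous structure, I would partition the interval $(0,\nu_\infty]$ according to the value of $(I_-,I_0,I_+)$. Only finitely many partitions are possible (at most $3^n$), so the interval decomposes into finitely many sub-intervals. On each sub-interval with $I_0\ne\emptyset$, equation (\ref{mnlin:eqn}) solves to $\mu=(\nu\,\s{Q}-\s{M})/\s{U}$, which is linear in $\nu$ because $\s{U}=\sum_{j\in I_0}m_j u_j>0$, using the positivity of $\v{u}$ together with the non-emptiness of $I_0$. Concatenating the pieces yields piecewise linearity. Continuity across a breakpoint follows from the continuity of $\v{p}(\nu)$: both neighbouring partitions describe the same primal optimum there (they differ only on indices whose cap argument lies exactly on the boundary $\pm1$), and hence through the relation $p_j=\mu u_j/\nu$ valid for any $j\in I_0$, they yield the same value of $\mu$.

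Finally, the assertion that the form of $\v{p}$ is unchanged once $\nu$ exceeds $\nu_\infty$ is essentially a restatement of the observation made just before the lemma: the relation $G(\nu,\eta_\infty)=0$ persists for all $\nu\ge\nu_\infty$, so the sets $I_\pm$ stay frozen at their values at $\nu_\infty$, $I_0$ stays empty, and Lemma~\ref{gen2:thm} then leaves no further structural freedom. The main subtlety I anticipate is the continuity argument at the breakpoints, where several partitions can simultaneously satisfy $G(\nu,\mu)=0$; I would handle this by carefully re-classifying the indices whose cap argument touches $\pm1$ and checking that, while the slope $\s{Q}/\s{U}$ of $\mu(\nu)$ may change across a breakpoint, the value of $\mu$ itself does not jump.
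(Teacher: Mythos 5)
Your argument reproduces the paper's own reasoning: the lemma is stated explicitly as a ``recap'' of the derivation in the paragraphs immediately preceding it, and the paper supplies no separate proof block, so the intended proof is exactly the monotonicity-plus-finite-partition argument you give. Your extra care at the breakpoints (using $p_j=\mu u_j/\nu$ on $I_0$ and the continuity of $\v{p}(\nu)$ from Sec.~\ref{setting:sec}) and at the right endpoint $\nu_\infty$ (where $I_0$ may already be empty, so uniqueness of $\mu$ must come from a limit rather than from the equation alone) are correct elaborations rather than a different route. One caveat you have inherited from the paper itself: the lemma's literal final claim, that $\v{p}$ does not change for $\nu>\nu_\infty$, is not what your (or the paper's) argument shows, and is in fact false as stated --- once $I_0$ is empty every coordinate is $p_j=q_j\pm 1/\nu$, which continues to vary with $\nu$, as the paper's own three-dimensional example exhibits for $\nu\geq 84$. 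What you actually prove, and what the preceding text in the paper asserts, is that the partition $(I_-,I_0,I_+)$ freezes and hence $\mu(\nu)$ gains no new linear segments beyond $\nu_\infty$; it would be cleaner to state and prove the lemma in that form.
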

This establishes the fact that $\mu(\nu)$ is a piecewise linear function.
The lingering question is how many linear sub-intervals the function can attain.
To study this property, we take a geometric view of the plane defined by
$(\nu,\mu)$. Our combinatorial characterization of the number of sub-intervals
makes use of the following definitions of lines in $\reals^2$,
\begin{eqnarray}
  \lll_{+j} & = & \{(\nu,\mu)\ |\ u_j\mu-q_j\nu= +1\} \label{l-plus:eq} \\
\lll_{-j} & = & \{(\nu,\mu)\ |\ u_j\mu-q_j\nu= -1\} \label{l-minus:eq} \\
\lll_{0}  & = & \{(\nu,\mu)\ |\ \mu\,\s{U} - \nu\s{Q} + \s{M} = 0\}
\label{l-zero:eq} ~ ~ ,
\end{eqnarray}
where $-\infty<\nu<\infty$ and $j\in[n]$. The next theorem gives an upper
bound on the number of linear segments the function $\mu(\cdot)$ may attain.
While the bound is quadratic in the dimension, for both artificial data and
real datasets the bound is too pessimistic, as we demonstrate in the sequel.

\begin{figure}[t]
  \begin{center}
  \centerline{\psfig{figure=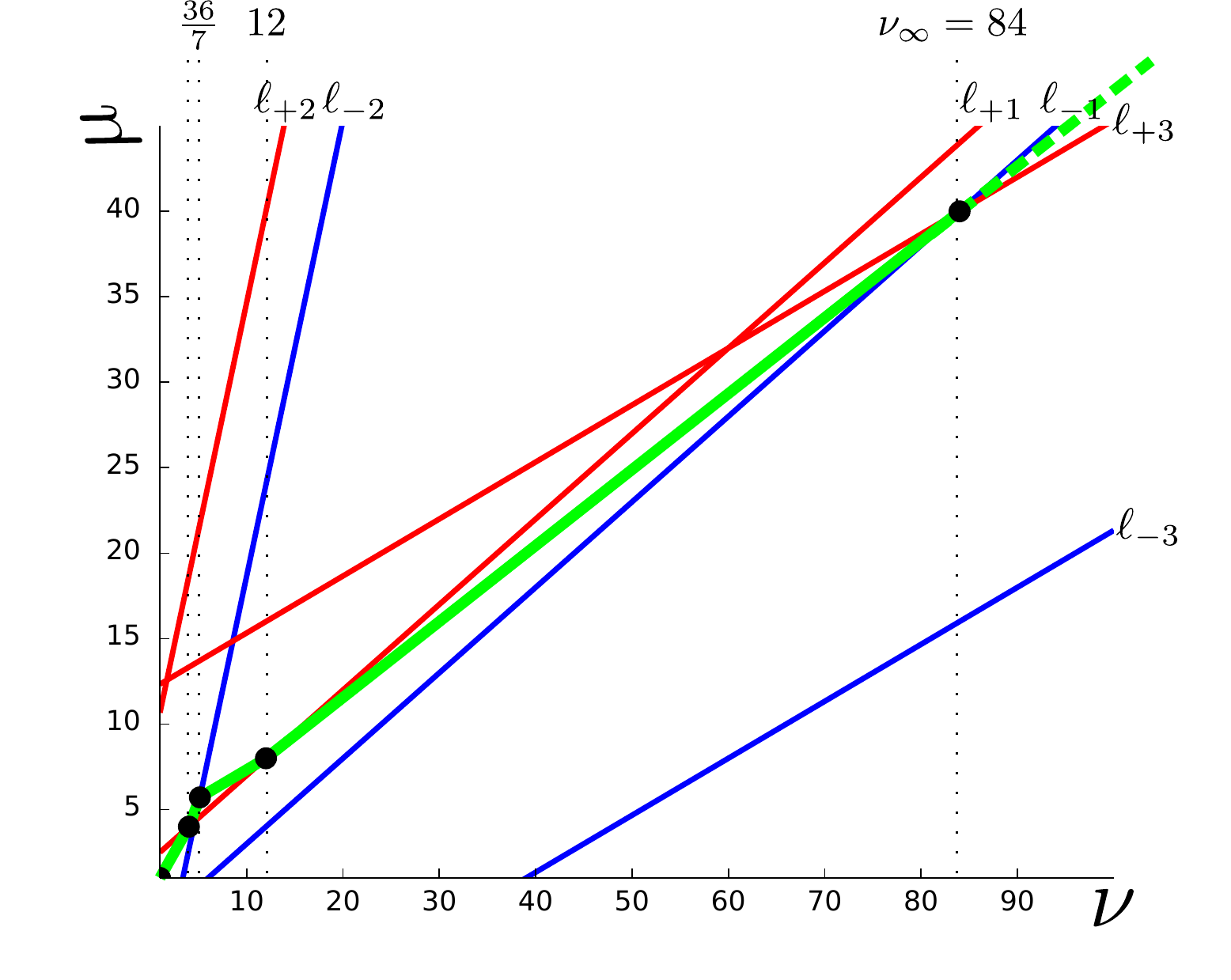,height=6cm}}
  \end{center}
  \vspace{-1cm}
  \caption{An illustration of the function $\mu(\nu)$ for a synthetic
  $3$ dimensional example.}
  \label{munu:fig}
\end{figure}

\begin{theorem} \label{quadseg:thm}
The piecewise linear function $\mu(\nu)$ consists of at most $n^2$
linear segments for $\nu\in\reals_+$.
\end{theorem}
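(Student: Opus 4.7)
The plan is to bound the number of segments of $\mu(\nu)$ for $\nu\in\R_+$ by a combinatorial argument in the arrangement of the $2n$ lines $\{\ell_{+j},\ell_{-j}\}_{j=1}^n$ in the $(\nu,\mu)$-plane. The building blocks are the piecewise linearity established in Lemma \ref{munu:lemma} and the identification of partitions with cells of that arrangement.

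First, I would observe that each linear segment of $\mu(\nu)$ corresponds uniquely to a partition $(I_-,I_0,I_+)$ of $[n]$: the segment lies on the line $\ell_0:\mu\,\mathcal{U}-\nu\mathcal{Q}+\mathcal{M}=0$ from \eqref{mnlin:eqn}, whose coefficients are determined by the partition. Dually, within each open cell of the arrangement of the $2n$ lines defined in \eqref{l-plus:eq} and \eqref{l-minus:eq}, the signs of $u_j\mu-q_j\nu\mp 1$ are fixed, so the partition is constant. Hence every segment of the path is a line segment lying in a single cell of this arrangement.

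Next, I would argue that distinct segments lie in distinct cells. Two consecutive segments must have distinct partitions, since otherwise they would lie on the same line $\ell_0$ and coalesce into a single segment. More generally, if the path were to return to a previously visited cell with partition $P$, both visits would lie on the line $\ell_0^P$; by continuity of $\mu(\nu)$ at the endpoints of the intervening detour, the intermediate traversal (on lines $\ell_0^{P'}$ with $P'\neq P$) would be forced to agree with $\ell_0^P$ at two distinct $\nu$-values, generically forcing $\ell_0^{P'}=\ell_0^P$ and hence $P'=P$, a contradiction. Therefore the number of segments equals the number of cells visited, which is at most the total number of cells of the arrangement.

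Finally, I would count cells. Since the $2n$ lines form $n$ parallel pairs (each $\ell_{+j}$ is parallel to $\ell_{-j}$, sharing slope $q_j/u_j$), the arrangement has $\binom{2n}{2}-n=2n^2-2n$ pairwise intersections and therefore at most $2n^2+1$ cells in the full plane. Restricting to the half-plane $\nu\ge 0$, where the entire path lies, and using both the $\nu$-monotonicity of the path and the parallel-pair structure yields the sharp bound of $n^2$. The principal obstacle is this last accounting: the naive arrangement count only gives $O(n^2)$, and tightening to the precise constant $n^2$ requires careful bookkeeping of how the half-plane restriction combines with the parallel-pair structure, together with a careful analysis of the no-revisit claim in the presence of multi-cell detours.
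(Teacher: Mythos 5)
Your approach is the same as the paper's: identify each linear segment of $\mu(\nu)$ with a cell of the arrangement of the $2n$ constraint lines, count cells using the $n$ parallel pairs (your $\binom{2n}{2}-n=2n^2-2n$ intersections giving $2n^2+1$ cells agrees with the paper's inductive count), and then try to sharpen to $n^2$ on the half-plane. However, the two places you flag as obstacles are precisely where your argument is incomplete, and both gaps are genuine.

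First, the no-revisit claim. Your ``generically forcing $\ell_0^{P'}=\ell_0^P$'' argument does not hold, and you concede it fails for multi-cell detours. The theorem is unconditional, so you cannot appeal to genericity. The correct reason the path cannot re-enter a cell $C$ is simpler and uses nothing generic: for each $\nu$ the equation $G(\nu,\cdot)=0$ has a \emph{unique} solution $\mu(\nu)$ (established before Lemma~\ref{munu:lemma}). Inside the convex cell $C$ with partition $P$, the set $\{G=0\}$ is $\ell_0^P\cap C$, a single segment. If $(\nu_1,\mu(\nu_1))$ and $(\nu_3,\mu(\nu_3))$ both lie in $C$ with $\nu_1<\nu_3$, then for every $\nu_2\in[\nu_1,\nu_3]$ the segment $\ell_0^P\cap C$ contains a point $(\nu_2,\mu')$ with $G(\nu_2,\mu')=0$, and uniqueness forces $\mu(\nu_2)=\mu'$, so the path never leaves $C$ on that interval. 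No appeal to continuity of the detour or to generic position is needed.

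Second, and more seriously, you do not actually derive the bound $n^2$: you stop at ``requires careful bookkeeping'' without identifying the mechanism. The cell count $2n^2+1$ by itself only gives $O(n^2)$. The paper closes the gap with two specific observations you are missing: (a) the two unbounded extremal cells, where $G(\nu,\mu)=\pm n$ identically (i.e., where all coordinates are in $I_+$ or all in $I_-$), cannot satisfy $G=0$ and hence cannot contain the path, cutting the count to $2n^2-1$ over all of $\R$; and (b) the antisymmetry $G(-\nu,-\mu)=-G(\nu,\mu)$ maps the path on $\nu<0$ bijectively onto the path on $\nu>0$, so restricting to $\nu\in\reals_+$ halves the remaining count, giving at most $n^2$ segments. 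Your suggestion to use ``$\nu$-monotonicity of the path and the parallel-pair structure'' does not substitute for these two steps; as stated it does not produce the factor of two nor the subtraction of the two extremal cells.
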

\begin{proof}
Since we showed that that $\mu(\nu)$ is a piecewise linear function, it remains
to show that it has at most $n^2$ linear segments. Consider the two dimensional
function $G(\nu,\mu)$ from (\ref{hs3:eqn}). The ($\nu,\mu)$ plane is divided
by the $2n$ straight lines
$\lll_1,\lll_2,\ldots,\lll_n,\lll_{-1},\lll_{-2},\ldots,\lll_{-n}$
into at most $2n^2+1$ polygons.
The latter property is proved by induction. It clearly holds for $n=0$.
Assume that it holds for $n-1$. Line $\lll_n$ intersects the previous $2n-2$
lines in at most $2n-2$ points, thus splitting at most $2n-1$ polygons into two
separate polygonal parts. Line $\lll_{-n}$ is parallel to $\lll_n$, again adding
at most $2n-1$ polygons. Together we get at most
$2(n-1)^2+1+2(2n-1)=2n^2+1$ polygons, as required per induction.
Recall that $\mu(\nu)$ is linear inside each polygon.
The two extreme polygons where
$G(\nu,\mu) = \pm n$ clearly disallow $G(\nu,\mu)=0$, hence
$\mu(\nu)$ can have at most $2n^2-1$ segments for $-\infty<\nu<\infty$.
Lastly, we use the symmetry $G(-\nu,-\mu)=-G(\nu,\mu)$ which implies that for
$\nu\in\reals_+$ we have at most $n^2$ segments, as required.
\end{proof}
When the prior $\v{u}$ is uniform, i.e. $u_j=1/\sum_j m_j$ for all $j\in[n]$,
the number of segments is at most $n+1$. We defer the analysis of the uniform
case to a later section as the proof stems from the algorithms we describe
in the sequel.

To conclude this section, we characterize the path $\mu(\nu)$ in
the following toy example. Let
$\v{u}=({1}/{2}, {1}/{8}, {1}/{12})$,
$\v{q}=({1}/{4}, {1}/{3}, {1}/{36})$, and
$\v{m}=(1, 2, 3)$. Note that $\v{u} \cdot \v{m} = \v{q} \cdot \v{m} = 1$ as
required from our definition of distributions with multiplicity. The complete
characterization of $\mu(\nu)$ for $0<\nu<\infty$ is as follows,
\begin{center}
\begin{tabular}{|c|c|c|c|c|} \hline
  region of $\nu$ & $I_-$ & $I_0$ & $I_+$ & $\mu(\nu)$ \\ \hline
  $0<\nu<4$ & $\{\}$ & $\{1,2,3\}$ & $\{\}$ &
    $\mu = \nu$ \\
  $4\le\nu<\frac{36}{7}$ & $\{\}$ & $\{2,3\}$ & $\{1\}$ &
    $\mu = 4 + \frac{3}{2}(\nu-4)$ \\
  $\frac{36}{7}\le\nu<12$ & $\{2\}$ & $\{3\}$ & $\{1\}$ &
    $\mu = \frac{40}{7} + \frac{1}{3}(\nu -\frac{36}{7})$ \\
  $12\le\nu<84$ & $\{2\}$ & $\{1,3\}$ & $\{\}$ &
  $\mu = 8 + \frac{4}{9}(\nu - 12)$ \\
  $\nu_\infty=84\le\nu$ & $\{1,2\}$ & $\{\}$ & $\{3\}$ &
    $\mu = 8 + \frac{4}{9}(\nu - 12)$ \\
  \hline
\end{tabular}
\end{center}

\medskip

The above table implies that $\nu_\infty=84$ and $\mu_\infty=40$. Thus,
the partition $I_-=\{1,2\}$, $I_+=\{3\}$, and $I_0$ is empty remains intact
for any $\nu\ge 84$. Figure \ref{munu:fig} shows the constraint lines
$\ell_{\pm 1},\ell_{\pm 2}, \ell_{\pm 3}$, the path segments and the path
itself. Interestingly, note that the cardinalities
of $I_0$ and $I_+$ are not monotone. Indeed, the first coordinate enters
$I_+$ from $I_0$ and then returns to $I_0$, finally ending at $I_-$ for
$\nu\ge\nu_\infty$. This kind of non-monotone behavior is the reason why
$O(n^2)$ linear segments are necessary to describe $\mu(\nu)$ in the worst
case.

\section{Path Tracking Algorithms} \label{local_homotopy:sec}
In this section we build on the geometric description above and discuss
algorithms for tracking the maximum entropy relaxation path.  The algorithms
are based on a local search for the next intersection of the line $\ell_0$ with
one of the lines $\ell_{\pm j}$.  These algorithms are simple to implement and
efficient in practical settings.  In Appendix~\ref{global_homotopy:sec} we
outline a more complicated algorithm with slightly better worst case
performance, which maintains global information of the homotopy.  Discussion of
the global tracking algorithm is deferred to the appendix as it is not
straightforward to implement.

\subsection{Local Homotopy Tracking}
Since we showed that the
optimal solution $\v{p}$ can be straightforwardly obtained from the variable
$\mu$, it suffices to devise an algorithm that efficiently tracks the function
$\mu(\nu)$ as we traverse the plane $(\nu,\mu)$ from $\nu=0$ through the last
change point which we denoted as $(\nu_\infty,\mu_\infty)$. In this section we
give an algorithm that traces $\mu(\nu)$ by tracking the changes in $\mu(\nu)$
through a local search process. Concretely, we start by computing the initial
slope of $\lll_0$ at $\nu=0$. We then find the closest intersection with a line
$\lll_j$ (for $1 \le |j| \le n$) and calculate the new slope of $\lll_0$ as the
intersection with the line induces a new partition into the sets $I_+,I_-,I_0$.
We continue this process until we reach the point $(\nu_\infty,\mu_\infty)$
beyond which the partition into $(I_\pm,I_0)$ does not change.

More formally, the local tracking algorithm follows the piecewise linear
function $\mu(\nu)$, segment by segment. Each segment corresponds to a subset
of the line $\lll_0$ for a {\em given} triplet $(\s{M},\s{U},\s{Q})$.  It is
simple to show that $\mu(0)=0$, hence we start with $(\nu,\mu)=(0,0)$.  Given
the pair $(\nu,\mu)$ the partition into the sets $I_\pm$ and $I_0$ is
straightforward as we can rewrite (\ref{Isets:eqn}) as,
\begin{eqnarray*}
  I_+ & = & \{1\le j\le n\ |\ \mu\,u_j - \nu\,q_j\geq 1\} \\
  I_0 & = & \{1\le j\le n\ |\ |\mu\,u_j - \nu\,q_j| < 1\} \\
  I_- & = & \{1\le j\le n\ |\ \mu\,u_j - \nu\,q_j \leq 1\} ~ ~ .
\end{eqnarray*}
This form of index partitioning implies that given $(\nu,\mu)$ we can
calculate $\s{M},\s{U},\s{Q}$ directly as follows,
\begin{equation} \label{muq_direct:eqn}
\s{M}\; = \!\!\! \sum_{\mu u_j-\nu q_j\ge 1} \hspace{-0.45cm} m_j \; -
\!\!\!
\sum_{\mu u_j-\nu q_j\le -1} \hspace{-0.5cm} m_j \hspace{1cm}
\s{U}\; = \!\!\! \sum_{|\mu u_j-\nu q_j|<1} \hspace{-0.5cm} m_j\, u_j
\hspace{1cm}
\s{Q}\; = \!\!\! \sum_{|\mu u_j-\nu q_j|<1} \hspace{-0.5cm} m_j\,q_j  ~ ~ .
\end{equation}
From the triplet $(\s{M},\s{U},\s{Q})$ the initial characterization of
$\lll_0$ is readily available as we can write,
\begin{equation} \label{muofnu:eqn}
\mu = \frac{\nu\, \s{Q} - \s{M}}{\s{U}}
\; = \;
\frac{\s{Q}}{\s{U}} \, \nu - \frac{\s{M}}{\s{U}} 
~ ~ .
\end{equation}
In words, the line $\ell_0$ has a slope of $\s{Q}/\s{U}$ and an intercept
of $-\s{M}/\s{U}$. Initially the set $I_0=[n]$, $\s{M}=0$, and therefore
$\s{Q}=\s{U}=1$, which implies that the initial slope of $\ell_0$ is $1$
and the intercept is $0$.
We now track the value of $\mu$ as $\nu$ increases (and the original relaxation
parameter $\delta$ decreases). The characterization of the line $\lll_0$
remains intact until $\lll_0$ hits one of the lines
$\lll_j$ for $1 \leq |j| \le n$. To find
the line intersecting $\lll_0$ we need to compute the potential intersection
points ($\mu_j,\nu_j)$ for
$\nu_{-n},\nu_{-n+1},\ldots,\nu_{-1},\nu_1,\nu_2,\cdots,\nu_{n}$ where
$(\nu_j, \mu_j) = \lll_0 \cap \lll_j$. This amounts to calculating the
potential intersection values,
\begin{equation} \label{intersect:eqn}
\nu_j =
\frac{\s{M} u_{|j|} + \s{U}\cdot{\rm sign}(j)}{\s{Q} u_{|j|} - \s{U} q_{|j|}}
\qquad;\qquad {\rm sign}(j)=\begin{cases}
  1 & j > 0 \\
  -1 & j <0
\end{cases}\,.
\end{equation}
The lines for which the denominator is zero correspond to an infeasible
intersection and can be discarded. The smallest value $\nu_j$ which is larger
than the current recorded value of $\nu$ (i.e. the last observed intersection
of $\lll_0$ with one the lines $\lll_j$) corresponds to the next line
intersecting $\lll_0$. From $\nu_j$ we compute $\mu_j$ using
(\ref{muofnu:eqn}). We now can construct the next segment of $\lll_0$, which
starts at $(\nu_j,\mu_j)$ by calculating a new value for the triplet
$(\s{M},\s{U},\s{Q})$ as prescribed by (\ref{muq_direct:eqn}). The homotopy
tracking process finishes once we cannot find any pair $(\nu_j,\mu_j)$ for
which $\nu_j$ is greater than the most recently traced found for $\mu$. That
is, the last intersection that was found corresponds to
$(\nu_\infty,\mu_\infty)$.

In the above description of the local tracking algorithm, the formation of the sets
$I_{\pm}$ and $I_0$ is tacit. Moreover, calculating the sums $\s{M},\s{Q}$,
and $\s{U}$ from scratch upon every newly found intersection of $\lll_0$ with
$\lll_j$ is not mandatory since each such intersection corresponds to moving a
{\em single} constraint $|p_j-q_j|\leq 1/\nu$ from $I_\pm$ to $I_0$ or vice
versa. By explicitly tracking the set $I_+$,$I_-$, and $I_0$ as they change, we
can update the sums $\s{M},\s{Q}$, and $\s{U}$ in a constant time upon each
newly encountered intersection. We therefore present an equivalent, yet more
efficient, procedure in which the sets $\s{M},\s{Q}$, and $\s{U}$ are updated
incrementally. Further, we use the latter property and the more
elaborate tracking scheme in the next section in which we analyze the case
where the prior distribution $\v{u}$ is uniform.
\begin{figure}[t] \label{updown:fig}
  \centerline{\epsfig{figure=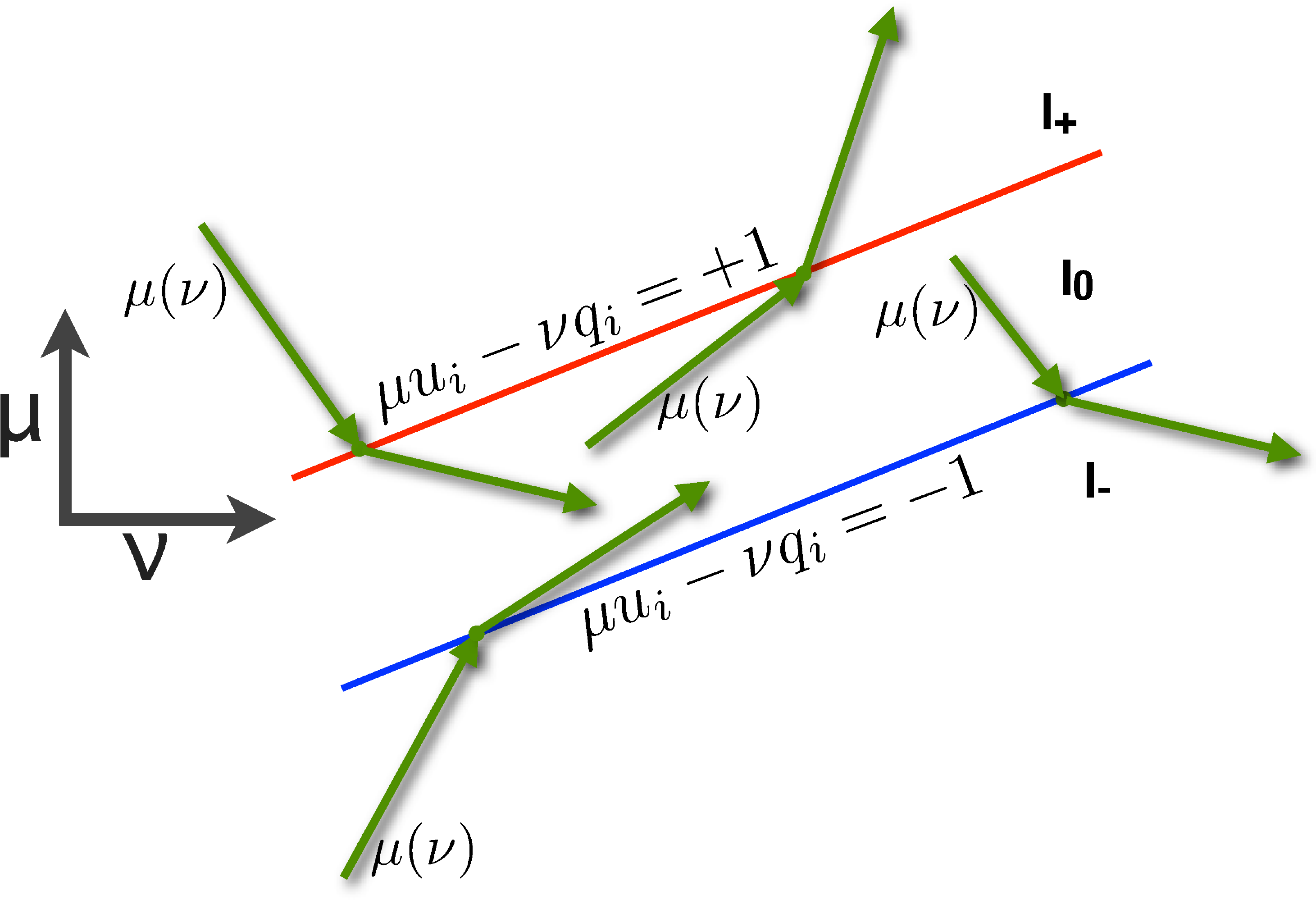,height=6cm}}
\caption{Illustration of the possible intersections between $\mu(\nu)$
and $\lll_j$ and the corresponding transition between the sets $I_\pm,I_0$.}
\end{figure}
As before, our goal is to track the piecewise linear function $\mu(\nu)$,
segment by segment where each segment forms a straight line
$\lll_0(\s{M},\s{U},\s{Q})$. In the alternative view, we track the sets
$I_-,I_0,I_+$ by defining an auxiliary variable per coordinate. Given the
current partition of $[n]$ into sets we denote,
\begin{equation}
  s_j = \left\{\hspace{-0.1cm}\begin{array}{rl}
  -1 & j\in I_-\\
   0 & j\in I_0\\
  +1 & j\in I_+
 \end{array} \right. ~ ~ .
\end{equation}
As in the tacit version, the explicit version starts with
$(\nu,\mu)=(0,0)$, $\s{M}=0$, and $\s{U}=\s{Q}=1$. Identically,
on each step we compute the $2n$ intersection values
$(\nu_j, \mu_j) = \lll_0 \cap \lll_j$ for $1\leq |j|\leq n$ and find the
nearest intersection. However, at this point the two procedures depart. Rather
than tacitly deferring the identification of the type of intersection to
the computation of the new sums, we explicitly characterize the form of
the change in the sets $I_\pm$ and $I_0$ due to the newly found intersection.
Recall that $\frac{\s{Q}}{\s{U}}$ is the left slope of $\mu(\nu)$ as
represented by the current line segment $\lll_0$. The slope of $\lll_j$
is $\frac{q_{|j|}}{u_{|j|}}$. Thus, when
$\frac{\s{Q}}{\s{U}}>\frac{q_{|j|}}{u_{|j|}}$ the $|j|$'th constraint is
moving ``up'' from $I_-$ to $I_0$ or from $I_0$ to $I_+$. When
$\frac{\s{Q}}{\s{U}}<\frac{q_{|j|}}{u_{|j|}}$ the $|j|$'th constraint is
moving ``down'' from $I_+$ to $I_0$ or from $I_0$ to $I_-$. See also
Fig.~4 for an illustration of the possible transitions
between the sets. For instance, the slope of $\mu(\nu)$ on the bottom left
part of the figure is larger than the slope the line it intersects. Since
this line defines the boundary between $I_-$ and $I_0$, we transition from
$I_-$ to $I_0$. All four possible transitions are depicted in the figure.
Thus, we need to consider only indices $j$ such that (\ref{intersect:eqn})
is defined and
\B (\s{Q}u_{|j|}-\s{U}q_{|j|}) \, s_{|j|} \, \le \, 0 \label{cond:eqn} ~ ~ . \E
Again, let $\nu_j$ be the smallest intersection value satisfying
(\ref{cond:eqn}). If there is no such value, we are done with the homotopy
tracking process. Otherwise, we can now update the sums $\s{M},\s{Q}$ and $\s{U}$
based on the single transition of the $j$'th element between the
characteristic sets. We also need to update $s_j$ itself. By isolating
the $j$'th term in (\ref{muq_direct:eqn}), the update of all the sums and
$s_j$ now takes the following incremental form,
\begin{eqnarray*}
	s_{|j|} & \leftarrow & s_{|j|} + {\rm sign}(\s{Q}u_{|j|}-\s{U}q_{|j|}) \\
\s{M} & \leftarrow & \s{M}+{\rm sign}(\s{Q}u_{|j|}-\s{U}q_{|j|})\,m_{|j|} \\
\s{U} & \leftarrow & \s{U}+{\rm sign}(j)\,
  {\rm sign}(\s{Q}u_{|j|}-\s{U}q_{|j|})\,m_{|j|}\,u_{|j|} \\
\s{Q} & \leftarrow & \s{Q}+{\rm sign}(j)\,
{\rm sign} (\s{Q}u_{|j|}-\s{U}q_{|j|})\,m_{|j|}\,q_{|j|}  ~ ~ .
\end{eqnarray*}
We are done with the tracking process when $I_0$ is empty, i.e. for all $j$
$s_j \neq 0$. The pseudo code of the entire process is provided in
Algorithm~\ref{alg:local_track}.

\begin{algorithm}[t]
	\caption{The local tracking algorithm for relaxed maximum entropy.}
  \label{alg:local_track}
\begin{algorithmic}[1]
	\STATE {\bf input:} Distributions $\v{q}$,$\v{u}$  ;  Multiplicity: $\v{m}$
	\STATE {\bf initialize:} $\s{Q} = \s{U} = 1$, $\s{M} = 0$ ,
	$L = \{(0,0)\}$, $\nu_{last} = 0$,  $\forall j\in[n]: s_j = 0$
	\WHILE{$\exists j \mbox{ s.t. } s_j = 0$}
		\STATE $\nu_{c} = \infty$
		\FORALL{$j\in \{-n,\dots,-1,1,\dots,n\}$}
			\IF{$(\s{Q}u_{|j|}-\s{U}q_{|j|}) s_j \le 0$}
				\STATE $\nu = \frac{\s{M} u_{|j|} + \s{U}{\rm sign}(j)}{\s{Q} u_{|j|} - \s{U} q_{|j|}}$
				\IF{$\nu <\nu_{c}$}
					\STATE $\nu_{c} \leftarrow \nu ; j_{c} \leftarrow j$
				\ENDIF
	\ENDIF
		\ENDFOR

				\IF{$\nu_{c}=\infty$}
					\STATE break
				\ENDIF
				\STATE $\mu_{c} = \frac{\nu_{c} \s{Q} - \s{M}}{\s{U}}$ ~ ; ~
				$L \leftarrow L \cup \{(\nu_{c},\mu_{c})\}$
				\STATE $ s_{|j_c|} \leftarrow s_{|j_c|} +
				{\rm sign}(\s{Q}u_{|j_c|}-\s{U}q_{|j_c|})$
				\STATE $\s{M} \leftarrow \s{M}+
				{\rm sign}(\s{Q}u_{|j_c|}-\s{U}q_{|j_c|})\,m_{|j_c|}$
				\STATE $\s{U} \leftarrow \s{U}+{\rm sign}(j_c)\,
				{\rm sign}(\s{Q}u_{|j_c|}-\s{U}q_{|j_c|})\,m_{|j_c|}\,u_{|j_c|}$
				\STATE $\s{Q} \leftarrow \s{Q}+{\rm sign}(j_c)\,
				{\rm sign} (\s{Q}u_{|j_c|}-\s{U}q_{|j_c|})\,m_{|j_c|}\,q_{|j_c|}$
				\ENDWHILE
		\STATE {\bf return} $L$
\end{algorithmic}
\end{algorithm}

\paragraph{Complexity.} The local tracking algorithm requires $O(n)$ memory
and $O(n\numchange)$ operations where $\numchange$ is the number of change
points in the function $\mu(\nu)$. When $\numchange$ is relatively small, this
algorithm is simple and efficient to implement. In Appendix
\ref{global_homotopy:sec} we give a more complicated algorithm, which employs
an auxiliary priority queues and requires fewer number of operations when
$\numchange > n\log(n)$. A illustration of the tracking result, $\mu(\nu)$,
along with the lines $\lll_{\pm j}$, that provide a geometrical description of
the problem, is given in Fig.~\ref{munu:fig}.

\subsection{Fast Homotopy Tracking for Sparse Observations}
\label{local-homotopy-sparse-obs:subsec}
In numerous practical settings, while the dimension $n$ may be very large, the
number of zero entries in $\v{q}$ can be substantial. We now discuss an
improvement to the local tracking algorithm that renders its feasible for very
large dimension $n$ so long as $\v{q}$ is sparse. Let us denote by $s$ the
support of $\v{q}$, $s:= |\left\{ 1\leq j\leq n \,|\, q_j \neq 0 \right\}|$,

Recall that the principle underlying the local tracking algorithm is that
every coordinate $j$ induces two lines in the $(\nu,\mu)$ plane, denoted
$\ell_{\pm j}$ as given by \eqref{l-plus:eq} and \eqref{l-minus:eq}. A
coordinate for which $q_j=0$ corresponds to a {\em horizontal} line $\ell_{\pm
j}$ described by the equation by $u_j\mu = \pm 1$. Since the path $\mu(\nu)$ is
non-decreasing with $\mu(0)=0$, each horizontal line $\ell_{-j}$ resides
outside the positive quadrant in $(\nu,\mu)$ plane and is never intersected.
Further, each of the lines $\ell_{+j}$ is intersected exactly once. Note that
$u_j > u_k$ implies that the line $\ell_{+j}$ is intersected at
$\mu=\frac{1}{u_j}$, before the line $\ell_{+k}$, whose intersection is at
$\mu=\frac{1}{u_k}$. Therefore, we sort the values $\{u_j\, | \, q_j = 0\}$ in
decreasing order, as a preliminary step. Then, the search for the next
intersection in the local tracking algorithm can be confined to scanning
$2s+1$ lines only. Namely, the $2s$ lines corresponding to nonzero values of
$\v{q}$ and the next horizontal line to be intersected from the zero set of
$\v{q}$.

Figure \ref{sparse-q:fig} provides an illustration of the constraint lines and
the path in the $(\nu,\mu)$ plane when $\v{q}$ is sparse. In this
$8$-dimensional toy example
for sparse observations,
$\v{u} = (0.0372,0.0445,0.0403,0.0144,0.0268,0.0088,0.0389,0.0390)$ is a vector
whose entries were sampled from a uniform distribution on $[0,1]$ and
normalized, $\v{q}=(0,\frac{5}{8},0,0,0,0,\frac{3}{8},0)$, and 
 $\v{m}=(0,10,0,0,0,0,10,0)$. Observe that each coordinate where $q_j=0$
 donates a single horizontal line, and that these lines are intersected
 according to the order of the corresponding values $u_j$.

    This fast version of the local tracking algorithm requires $O(n)$ memory
and $O(n\log n + s\kappa)$ operations where $\kappa$ is the number of path
change points. From Theorem \ref{quadseg:thm} we have in this case
$\kappa\leq s^2 + n$ so that the worst case time complexity of the fast
algorithm is $O(n\log n+ sn+ s^3)$. Therefore, in the practical case where
the sparsity $s$ of the observed distribution
is logarithmic in the dimension the total complexity even in the worst case
becomes $O(n \log n)$, which makes the algorithm practical for very high
dimensional problems.

\begin{figure}[t]
  \begin{center}
    \centerline{\psfig{figure=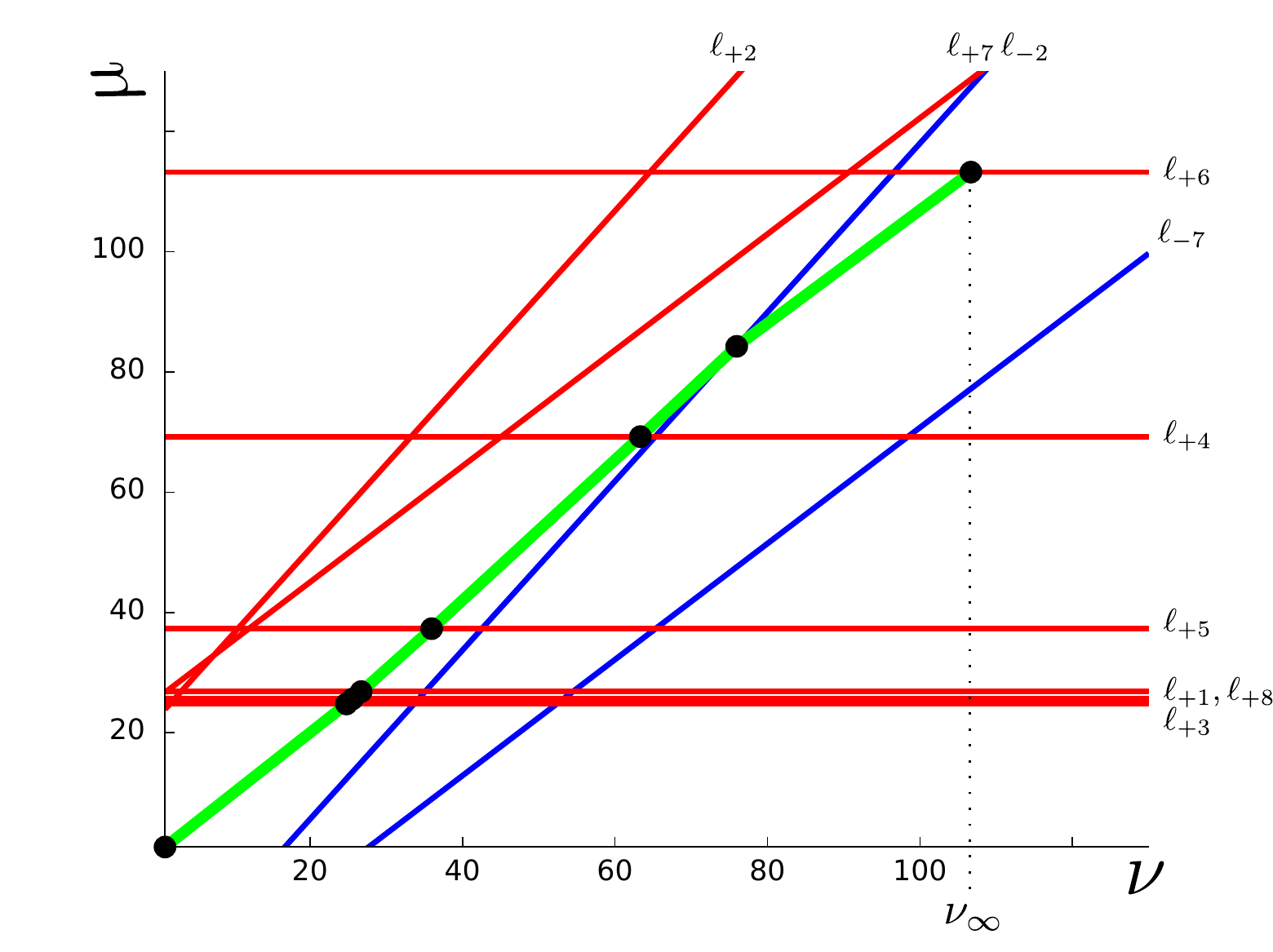,width=9cm}}
  \end{center}
  \vspace{-1cm}
  \caption{The relaxation path for a toy 8-dimensional example with a 2-sparse
  observations vector -- intersects mostly horizontal constraint lines}
  \label{sparse-q:fig}
\end{figure}

\subsection{Fast Homotopy Tracking for Uniform Prior} \label{uniform:sec}
We chose to denote the prior distribution by $\v{u}$ to underscore the fact
that in the case of no prior knowledge $\v{u}$ is the {\em uniform}
distribution. When the prior is uniform, then for all $j\in[n]$ the value
of $u_j$ is the same and is equal to
$$ \uu \eqdef {\left(\sum_{j=1}^{n}m_j\right)}^{-1} ~ ~ .  $$
Moreover, for a uniform prior objective function amounts to the negative
entropy. By reversing the sign of the objective, we obtain the classical
maximum entropy problem. Similarly to the case of sparse observed vector, the
case of a uniform prior distribution simplifies the geometry and consequently
significantly lowers the complexity of the tracking algorithm.

Let us consider a point $(\nu,\mu)$ on the boundary between $I_0$
and $I_+$, namely, there exists a line $\lll_{+i}$ such that,
$$ \mu u_i - \nu q_i = \mu \uu - \nu q_i = 1 ~~ . $$
By definition, for any $j\in I_0$ we have
$$ \mu u_j - \nu q_j =  \mu \uu  - \nu q_j  < 1 = \mu \uu - \nu q_i  ~~ . $$
Thus, $q_i<q_j$ when $i\in I_+$ and  for all $j\in I_0$. The inequality
implies that
\begin{equation} \label{uniforder:eqn}
m_j \, \uu \, q_j \;  > \;  m_j \, \uu \, q_i ~~ .
\end{equation}
Summing~(\ref{uniforder:eqn}) over $j \in I_0$ we get that
$$ \s{Q}\,\uu \;=\; \sum_{j\in I_{0}}  m_{j}\,q_{j}\,\uu  \; > \;
     \sum_{j\in I_{0}} m_{j}\,\uu\,q_{i} \; =\; \s{U} q_i ~ ~ , $$
hence,
$$\frac{q_i}{u_i} = \frac{q_i}{\uu}<\frac{\s{Q}}{\s{U}}$$
and we must be moving ``up'' from $I_0$ to $I_+$ when the line $\lll_{0}$ hits
a line $\lll_{i}$. Similarly we must be moving ``down'' from when $\lll_{0}$
intersects a line on the boundary between $I_0$ and $I_-$. We summarize these
properties in the following theorem. Here, we say that the set $I(\nu)$ is {\em
monotonically non-decreasing} if $i\in I(\nu)$ implies $i\in I(\nu_1)$ whenever
$\nu\leq\nu_1$. 
\begin{theorem}
When the prior distribution $\v{u}$ is uniform, $I_-(\nu)$ and $I_+(\nu)$ are
monotonically nondecreasing and $I_0(\nu)$ is monotonically non-increasing  in
$\nu>0$ . Further, the piecewise linear function $\mu(\nu)$ consists of at most
$n+1$ line segments.
\end{theorem}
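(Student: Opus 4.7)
The plan is to build directly on the slope calculation given in the paragraph preceding the theorem statement, extending it to cover crossings with $\lll_{-i}$ as well, and then to read off both claims from the resulting one-way behavior of indices.

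Recall what was already established: if at a change point $\lll_0$ intersects $\lll_{+i}$ at a boundary point between $I_0$ and $I_+$, then under the uniform prior assumption $u_j \equiv \uu$, summing the strict inequality $q_i < q_j$ over $j \in I_0$ yields $q_i/u_i < \s{Q}/\s{U}$. By the classification of transitions illustrated in Fig.~\ref{updown:fig}, the inequality $\s{Q}/\s{U} > q_{|j|}/u_{|j|}$ corresponds to an ``up'' move, and at $\lll_{+i}$ the only such move is $I_0 \to I_+$ (not the reverse). I would then carry out the mirror calculation for $\lll_{-i}$: at a boundary point between $I_0$ and $I_-$, the inequality $q_i > q_j$ for $j\in I_0$ summed over $I_0$ gives $q_i/u_i > \s{Q}/\s{U}$, which forces a ``down'' move and hence the transition $I_0 \to I_-$, never $I_- \to I_0$.

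With both one-way conclusions in hand, I would deduce the monotonicity claim directly: as $\nu$ grows past a change point, indices only leave $I_0$ and never return, so $I_0(\nu)$ is monotonically non-increasing, and since every index that leaves $I_0$ enters either $I_+$ or $I_-$ and stays there, $I_+(\nu)$ and $I_-(\nu)$ are monotonically non-decreasing.

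For the segment count I would argue as follows: the tracking process starts at $\nu=0$ with $I_0=[n]$ and $I_\pm=\emptyset$, producing the first linear segment of $\mu(\nu)$. Every subsequent change point corresponds to an intersection of $\lll_0$ with some $\lll_{\pm i}$, and by the monotonicity just proved each such event can only remove indices from $I_0$. Because $|I_0|$ starts at $n$ and is bounded below by $0$, there can be at most $n$ change points, yielding at most $n+1$ linear segments. The only subtlety I anticipate is a degenerate change point where several indices leave $I_0$ simultaneously; but this only accelerates the depletion of $I_0$, so the bound of $n+1$ still holds.
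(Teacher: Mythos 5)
Your proof is correct and follows the paper's own argument: the slope calculation $q_i/\uu < \s{Q}/\s{U}$ at an $I_0$--$I_+$ boundary is exactly the paper's computation, your mirror inequality $q_i/\uu > \s{Q}/\s{U}$ at an $I_0$--$I_-$ boundary is what the paper abbreviates as ``similarly,'' and the $n+1$ segment bound is the immediate counting consequence (which the paper leaves implicit) since each change point strictly shrinks $I_0$ starting from $|I_0|=n$.
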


The local tracking algorithm when the prior is uniform is particularly
simple and efficient. Intuitively, there is a single condition which controls
the order in which indices enter $I_{\pm}$ from $I_{0}$, which is simply
how ``far'' each $q_{j}$ is from $\uu$, the single prior value. Therefore, the
algorithm starts by sorting $\v{q}$.
Let $q_{\pi_{1}} > q_{\pi_{2}} > \cdots > q_{\pi_{n}}$
denote the sorted vector.  Instead of maintaining a vector of
set-indicators $\v{s}$, we merely maintain two indices which denote as $j_-$
and $j_+$. These indices designate the size of $I_{-}$ and $I_{+}$ that were
constructed thus far. Due to the monotonicity property of the sets $I_\pm$,
as $\nu$ grows, the two sets can be written as,
$$
I_- = \{\pi_{j} \, | \, 1\le j<j_-\}
~~ \mbox{ and } ~~
I_+ = \{\pi_{j} \, |\,  j_+<j\le n\} ~ ~ .
$$
The local tracking algorithm starts as before with $\nu=0$, $\s{M}=0$,
$\s{U}=\s{Q}=1$.  We also set $j_-=1$ and $j_+=n$ which by definition imply
that $I_{+}$ and $I_{-}$ are empty, and $I_{0}=[n]$. On each iteration we
need to compare only two values which we compactly denote as,
$$
\nu_{\pm} =
  \frac{\s{M}\,\uu\;\pm\;\s{U}}{\s{Q}\,\uu\;-\;\s{U}\,q_{\pi_{j_{\pm}}}} ~~ .
$$
When $\nu_- \le \nu_+$ we encounter a transition from $I_{0}$ to
$I_{-}$ and as we encroach $I_{-}$ we perform the update
\begin{eqnarray*}
\nu & \leftarrow & \nu_- \\
\s{M} & \leftarrow & \s{M} \, - \, m_{\pi_{j_{-}}} \\
\s{U} & \leftarrow & \s{U} \, - \, m_{\pi_{j_-}} \uu \\
\s{Q} & \leftarrow & \s{Q} \, - \, m_{\pi_{j_{-}}} q_{\pi_{j_{-}}} \\
j_-  & \leftarrow & j_- \,+\, 1  ~ ~ .
\end{eqnarray*}
Similarly when $\nu_- > \nu_+$ we perform the update
\begin{eqnarray*}
\nu & \leftarrow & \nu_+ \\
\s{M} & \leftarrow & \s{M} \, + \, m_{\pi_{j_{+}}} \\
\s{U} & \leftarrow & \s{U} \, - \, m_{\pi_{j_+}} \uu \\
\s{Q} & \leftarrow & \s{Q} \, - \, m_{\pi_{j_{+}}} q_{\pi_{j_{+}}} \\
j_+  & \leftarrow & j_+ \,-\, 1  ~ ~ .
\end{eqnarray*}
The local tracking algorithm stops when $j_{-}>j_{+}$ as we exhausted the
transitions out of the set $I_{0}$, which becomes empty. We have thus shown
that the local tracking algorithm for a uniform prior requires $O(n)$
memory and $O(n\log(n))$ operations.

\section{Model Selection Along the Relaxation Path} \label{cv:sec}
In this section we study the following attractive property of the
relaxed maximum entropy problem.
 Suppose that we evaluate models by their
likelihood on a held-out validation set. As we will see, there is a unique admissible
model for each possible model size. Equivalently, in terms of our geometric
description of the relaxation path, there is only one admissible
relaxation parameter per path interval. Moreover, this
discrete family of possible relaxation parameters can be recovered efficiently
to arbitrary precision, without performing a grid search over the $\nu$
variable.

\paragraph{Setup.} Once the maximum entropy problem \eqref{relmaxent:eqn} is
solved, the distribution $\v{p}$ is efficiently characterized for each
possible relaxation value $\nu$. Assume that we have calculated the relaxation
path for given vectors $\v{u}$ and $\v{q}$. To make the dependence on $\nu$
explicit, we write $\v{p}(\nu)$ for the primal solution corresponding to the
relaxation parameter $\nu$. Having solved for the entire relaxation path, we
can evaluate the map $\nu\mapsto \v{p}(\nu)$ at any $\nu\geq 0$. This allows
us the luxury of having all possible values of $\nu$  to choose from. 

Assume that we have available validation data in the form of a vector of
counts, $\v{r}\in\mathbb{N}^n$. (We can normalize $\v{r}$ to be in the
probability simplex without changing the value of the optimal
solution.) To choose $\nu$, we minimize w.r.t. $\nu$ the negative empirical
log-likelihood for the validation data, $-\sum_i r_i \log p_i(\nu)$.  As we
will shortly see, this minimization is easiest to handle when the optimization
parameter is the {\em inverse} of the relaxation parameter $\nu$ we have
used so far, denoted by $\lambda = \frac{1}{\nu}$. We thus need to solve the
problem
\begin{eqnarray*}
  {\lambda}^\star = \arg\min_{0\leq\lambda\leq 1} L_r(\lambda) \eqdef
		-\sum_j r_j \log p_j(\lambda) \, .
\end{eqnarray*}

\paragraph{Efficient minimization of the validation likelihood on each path segment.}
Using $\lambda$, it is convenient to express the primal solution
\eqref{hs2:eqn} as follows,
\begin{eqnarray}
  p_j(\lambda) = 
  \begin{cases}
    q_j + \lambda \, & j\in I_+ \\
		{u_j}{\mu}\,{\lambda} \,  & j \in I_0 \\
    q_j - \lambda \, & j \in I_- 
  \end{cases} \,,
  \label{solution-thresh-form:eqn}
\end{eqnarray}
where $I_{\pm}$ and $I_0$ are given by \eqref{Isets:eqn} and depend on 
$\nu$ (or, equivalently, on $\lambda$).
We can thus write the objective $L_{\v{r}}(\lambda)$ as
\begin{eqnarray}
  -\sum_{j\in I_+}r_j \log(q_j + \lambda) - \sum_{j\in I_-}
	r_j\log(q_j-\lambda) -
		\sum_{j\in I_0} r_j\log\left(u_j{\mu}{\lambda}\right) ~ ~ .
  \label{validation1:eqn}
\end{eqnarray}
Since the terms $u_i$ does not depend on $\lambda$, they can be omitted from
$L_r(\lambda)$ without changing the minimizer, so that the last term can be
replace with $-\sum_{j\in I_0}r_j \log({\mu}{\lambda})$.  Let us now examine
the product $\mu\lambda=\mu/\nu$. The latter ratio is
constant between path change points, and satisfies~(\ref{mnlin:eqn}) which
implies that
$$\frac{\mu}{\nu} = \frac{\mathcal{Q} - \mathcal{M}/\nu}{\mathcal{U}}
= \frac{\mathcal{Q} - \mathcal{M} \lambda}{\mathcal{U}}
~ ~ . $$
We can further omit the constant $\mathcal{U}$ and replace the
last term in $L_r(\lambda)$ with 
$$
-\sum_{j\in I_0}r_j \log(\mathcal{Q}-\mathcal{M}\lambda)
$$
without changing the minimizer.
To recap, between each consecutive points $(1/\nu_{i+1}\,,\,1/\nu_{i})$
the validation set likelihood, up to constant terms, is equal to,
$$
L_{\v{r}}(\lambda)=  -\sum_{j\in I_+} r_j \log(q_j + \lambda) -
                      \sum_{j\in I_-} r_j\log(q_j-\lambda) -
                      \sum_{j\in I_0} r_j
                      \log\left(\mathcal{Q} - \mathcal{M}\lambda \right) ~ ~ .
$$
It is easy to check that the second order derivative of
$L_{\v{r}}(\lambda)$ is positive, namely, it is convex. To recap,
we have the following simple picture.
While the validation objective we wish to minimize over
$[0,1]$ is not convex, it is piecewise convex: specifically, it is convex
on any interval where $\nu\mapsto \mu(\nu)$ is linear, namely between any two
path change points, where the sets $I_\pm,I_0$ are constant. See also
Fig.~\ref{cv:fig}.
 
\begin{figure}[t] \label{cv:fig}
  \centerline{\psfig{figure=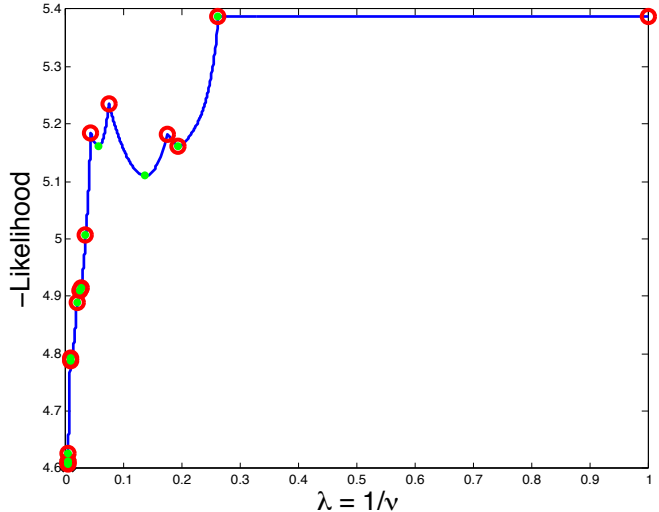,width=9cm}}
	\vspace{-0.25cm}
	\caption{The validation set likelihood is a piecewise convex function
	(Red: path nodes, green: minimum in each path segment)}
\end{figure}

It remains to show how to find the optimum in each interval numerically.
Concretely, for every consecutive path change points $\nu_i < \nu_{i+1}$ we wish
to solve
\begin{align*}
	\min_{\frac{1}{\nu_{i+1}}\leq \lambda \leq \frac{1}{\nu_i}}
        L_{\v{r}}(\lambda)=  -\sum_{j\in I_+}r_j \log(q_j + \lambda) -
        \sum_{j\in I_-} r_j\log(q_j-\lambda)
-\sum_{j\in I_0} r_j \log\left(\mathcal{Q} - \mathcal{M}\lambda \right).
\end{align*}
Since the objective $L_{\v{r}}(\lambda)$ is convex on a real interval, the
minimum exists, is unique, and can be found by bisection. In each interval,
we search for the $\lambda$ for which
$\frac{d}{d\lambda}L_{\v{r}}(\lambda)=0$. The derivative of $L_r(\lambda)$
\begin{align*}
  L'_{\v{r}}(\lambda) =  -\sum_{j\in I_+}r_j \frac{r_j}{q_j+\lambda} +
	  \sum_{j\in I_-} r_j \frac{r_j}{q_j-\lambda}
  + \left( \frac{\mathcal{M}}{\mathcal{Q}-\mathcal{M}\lambda} \right)
	   \sum_{j\in I_0} r_j ~ ~ ,
\end{align*}
is an increasing function in $\lambda$. Thus we simply need to search by
bisection for the zero crossing of $\frac{d}{d\lambda}L_{\v{r}}$.

\paragraph{A discrete set of admissible models.}
We have thus revealed a simple and aesthetic trade-off between model
complexity and validation loss for the relaxed maximum entropy problem. Having
solved for the path $\nu\mapsto\mu(\nu)$, we obtain a collection of intervals,
on each of which the path is linear. In each interval, the support of the dual
variable $\v{\alpha}$ is constant. The support size that is thus associated
with each path interval represents the number of entries in which the solution
$\v{p}$ is not proportional to the base distribution $\v{u}$, or in other
words, the model complexity. After finding the minimum cross-validation loss
at each interval, we can further associate with each path interval a single
validation loss $L_{\v{r}}(\nu^*)$ (the validation loss minimum over that
interval) and a single value $\nu^{*}$ (where this minimum is obtained).  We
thus obtain a discrete list of possible models, where each model is expressed
by its size, a regularization value, and a validation likelihood.

\begin{table}[t]
  \centering
  \begin{tabular}{|c|c|c|}
\hline
$\text{supp}(\v{\alpha}) $  & $\nu^{*}$ &
    $L^{*}_{\v{r}}$ \\
\hline
    $0$ & $\nu^{*}_0=1$ & $L^*_{\v{r}}(1)$ \\
    $1$ & $\nu^{*}_1$ & $L^*_{\v{1}}(\nu^*_1)$ \\ 

$\vdots$ & $\vdots$ & $\vdots$ \\ 
    $k$ & $\nu^{*}_k$ & $L^*_{\v{r}}(\nu^*_k)$ \\ 

    \hline
  \end{tabular}
  \caption{An illustration of the discrete list of possible parameters for
	a relaxed maximum entropy problem, given a validation set.}
  \label{tab:cv-list}
\end{table}

Since increasing model complexity only makes sense if it reduces the
validation loss, we can cull the list and keep only the options where the
validation loss decreases with the increase in the support of $\alpha$. Further,
we need only consider the cases where the support of $\alpha$ ranges from
$0$ (where the regularization penalty is infinite and $\v{p}=\v{u}$) to the
model complexity that globally minimizes the validation loss. Formally, we
obtain table of the form of Table \ref{tab:cv-list} which contains an
increasing list 
$\nu^*_0=1 < \nu^*_1 < \ldots < \nu^*_k $ such that
$\nu^*_k = \text{argmin}_{\nu\geq 1}L_{\v{r}}(\nu)$,
and such that for each $j=0,\ldots,k$,  
\begin{eqnarray*}
\nu^*_j = \arg\min_{\nu\,:\,\text{supp}(\v{\alpha}(\nu))=j}
L_{\v{r}}(\nu) \,,
\end{eqnarray*}
where $\alpha(\nu)$ is the dual solution of $\v{p}(\nu)$ and
$\text{supp}(\v{\alpha}(\nu))$ is the number of non-zero values in $\v{\alpha}$.

\section{Empirical Complexity of the Relaxation Path} \label{eval:sec}
In this section we assess the empirical complexity of the
relaxation path, in terms of number of path nodes.  As we will show, in
practice, the number of change points
is close to linear in the dimension. This renders the local tracking algorithms
viable approaches for real datasets. We consider two examples, one
with synthetic data and the other with data from natural text. It is
worth noting though that we obtained qualitatively similar results in all of
our experiments.

\begin{figure}[t]
  \centerline{\psfig{figure=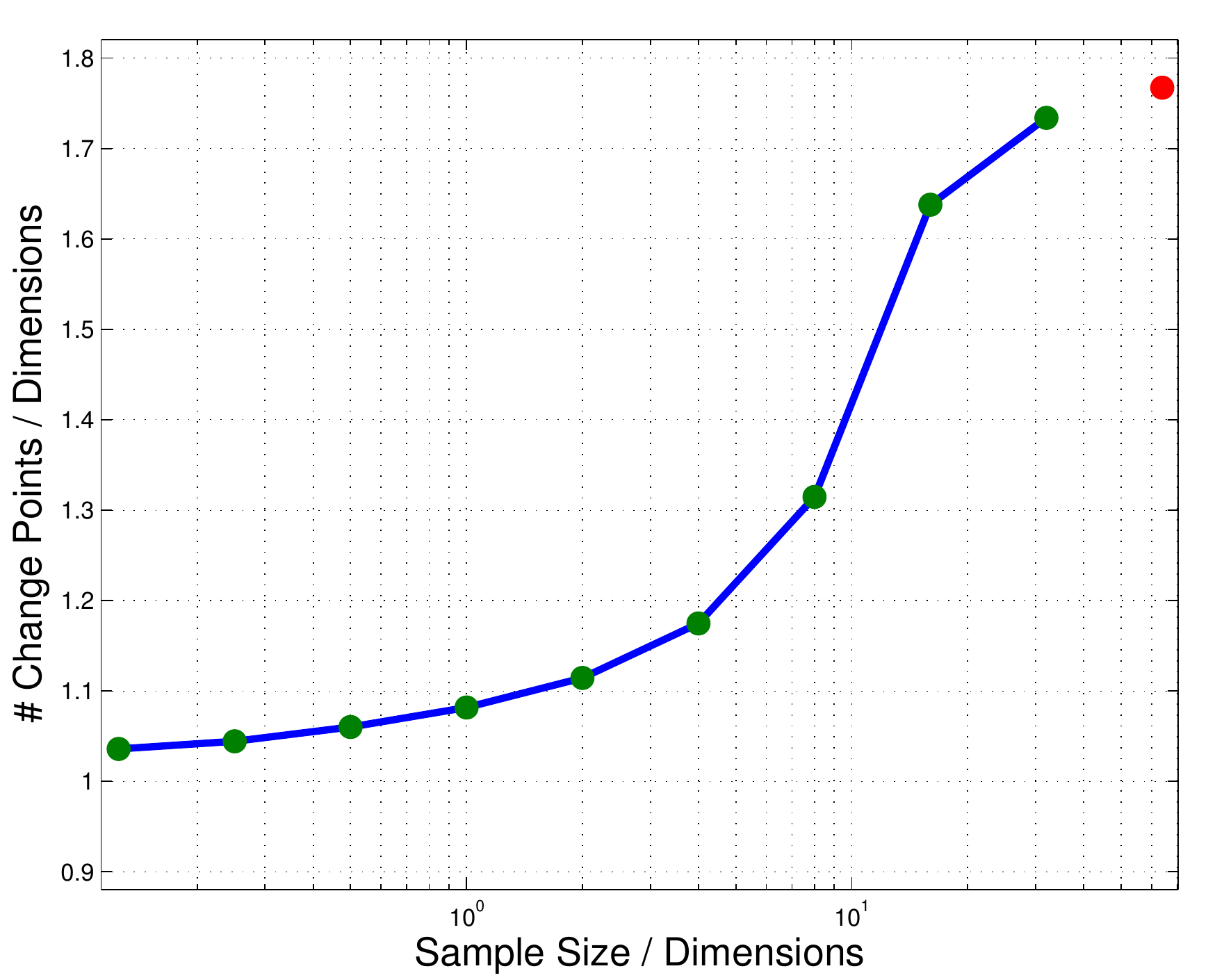,width=8cm}}
	\vspace{-0.25cm}
	\caption{Homotopy complexity as a function of the sample size.}
	\label{homotopy_complexity:fig}
\end{figure}
In the first experiment the prior distribution $\v{u}$ was a Zipf distribution
of the form $u_j \sim \frac{1}{2+j} $. The observed distribution $\v{q}$ was
sampled from a Zipf distribution which we denote by $\bar{\v{q}}$ where
$\bar{q}_j \sim \frac{1}{j}$. The dimension was set to 50,000 and we
normalized $\v{u}$, $\bar{\v{q}}$, and the sampled distribution $\v{q}$ so
that they sum to $1$. We generated numerous observed distributions $\v{q}$ by
sampling from $\bar{\v{q}}$. For each sample size we generated $10$
distributions from $\bar{\v{q}}$ and ran the local tracking algorithm with
$\v{u}$ and the sampled distribution $\v{q}$. We then computed the mean over
the 10 runs of the relaxation path complexity. Since the samples varied in
size, $\v{q}$ tended to have more zero entries the smaller the sample size
was. In Fig.~6 we show the relaxation path complexity in terms of the number of
changes points, divided by the dimension, as a function of the sample size
(also divided by the dimension). Evidently, the relaxation path complexity
grows almost linearly as a function of the sample sizes we examined. We also
checked the complexity without sampling by setting $\v{q} = \bar{\v{q}}$.  The
last point on the graph, which we kept disconnected, designates the complexity
when using $\bar{\v{q}}$ as the observed distribution. It is clear that for
much larger samples the complexity asymptotes at less than $1.8 n$.  In fact,
we never observed complexity above $2n$ in our experiments with Zipf
distributions.

\begin{figure}[t]
	\begin{center}
		\begin{tabular}{cc}
			\psfig{figure=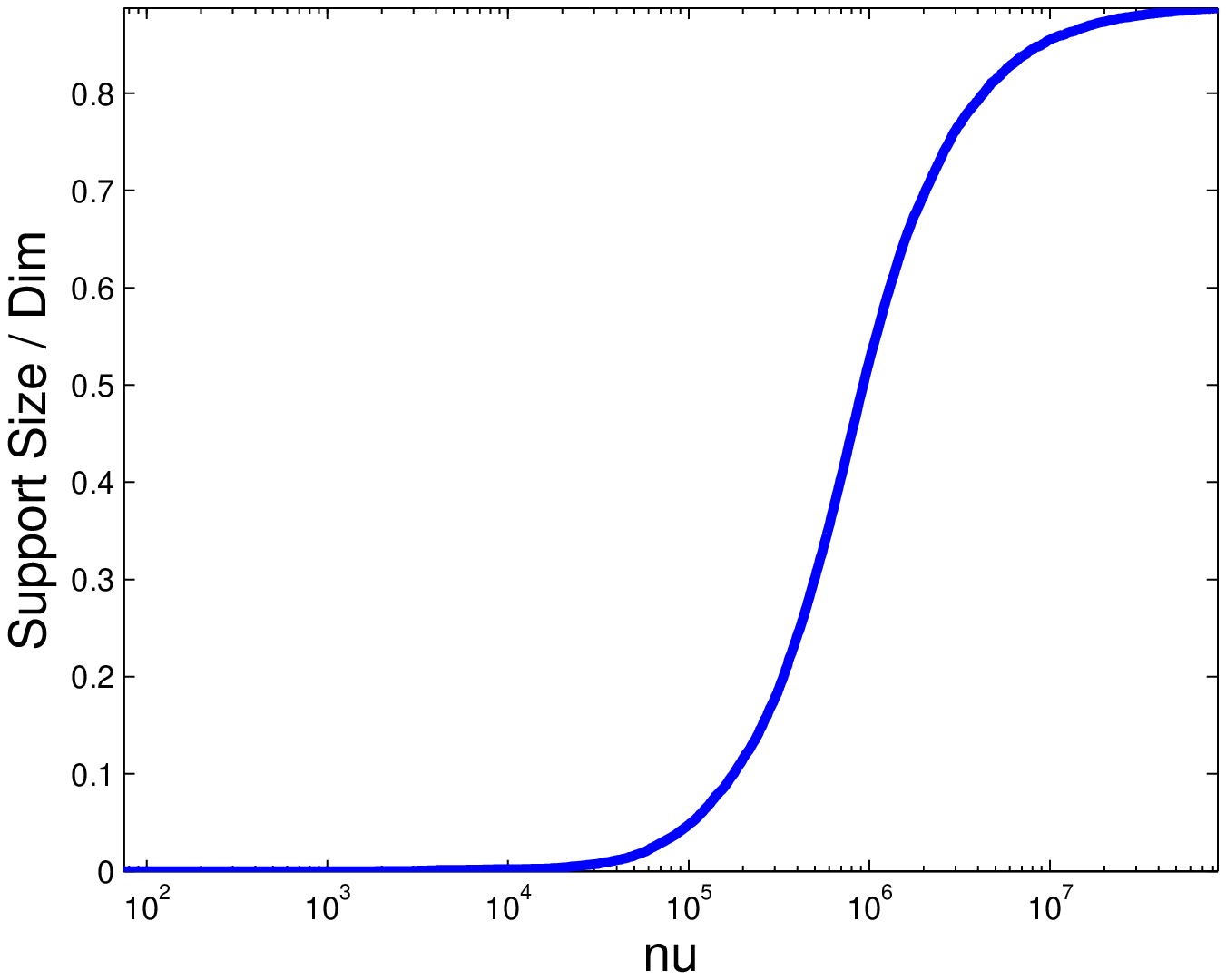,width=6cm}
			\psfig{figure=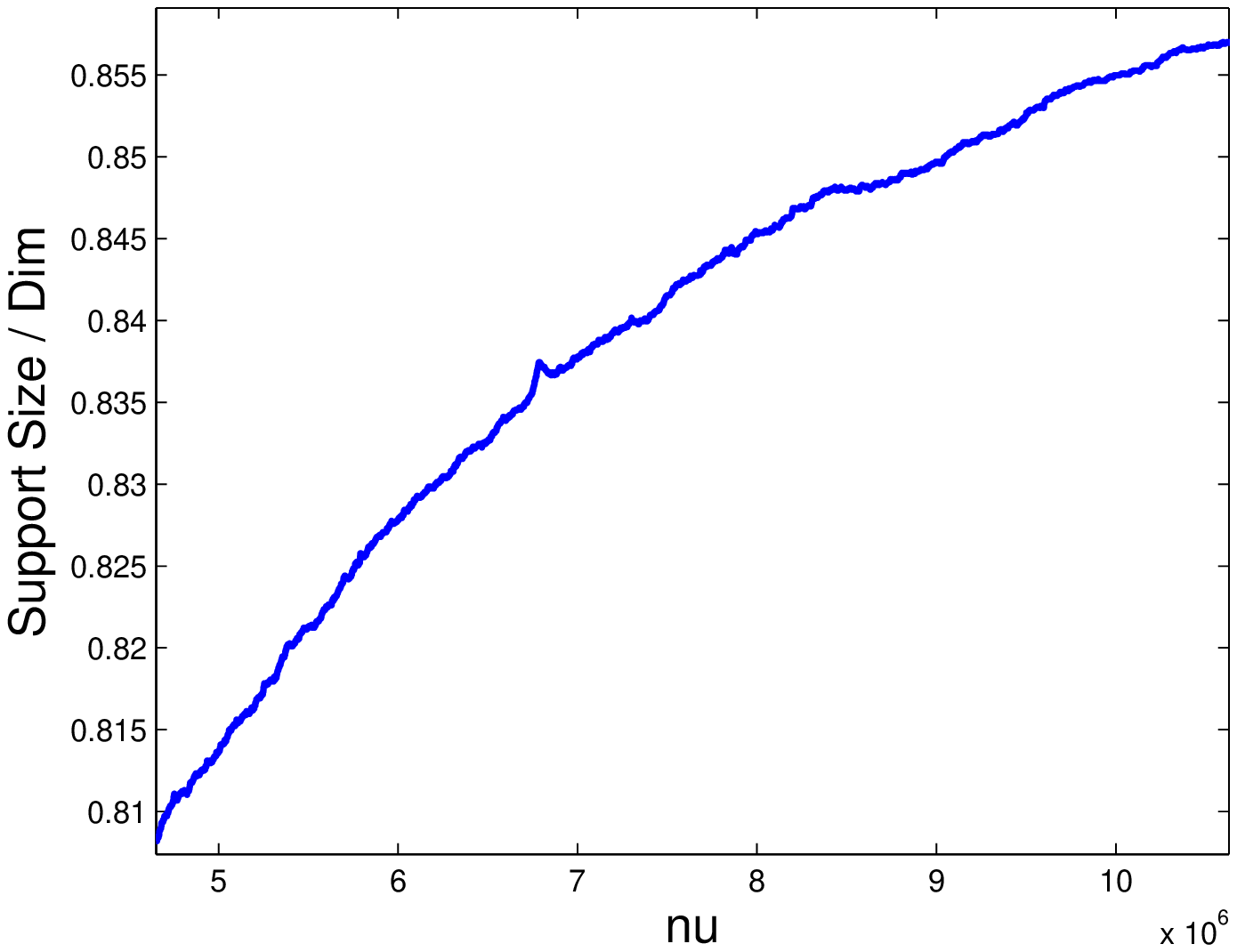,width=6cm}
		\end{tabular}
	\end{center}
	\vspace{-0.5cm}
	\caption{The cardinality of $I_+ \cup I_-$ as a function of $\nu$. Right: a
	zoomed in look into the middle range.}
	\label{support_size:fig}
\end{figure}
In the second experiment we used the Reuters Corpus Volume 1 (RCV1). The RCV1
dataset consists of a collection of approximately 800,000 text articles, each
of which is assigned multiple labels. There are 4 high-level categories,
Economics, Commerce, Medical, and Government (ECAT, CCAT, MCAT, GCAT), and
multiple more specific categories. After stemming and stop-listing we obtained
a dictionary of 17,632 words for the entire collection. We used the
maximum likelihood estimate of the entire collection to define the prior
distribution and the collection resulting by taking a single category as
the observed distribution. This setting mimics the source adaptation setting
of~\citep{BlitzerDrPe07} in which we need to adapt a distribution derived from
a large collection to a concrete distribution. In
Fig.~\ref{support_size:fig} we show the size
of the support, i.e. $|I_+ \cup I_-|$ as a function of $\nu$ for one of the
categories (ECAT). We would like to note that the path complexity was,
20,628, just barely more than the dimension (dictionary size). We plot on the
left hand side of the figure in log-scale the support size as a function of
$\nu$. It seems that the support is monotonic in $\nu$ and there is a region
where it grows linearly. However, a closer examination (right figure)
reveals non-monotonic changes in the size of $|I_+ \cup I_-|$. Examining the
individual sets we see that the overall tendency is to increase their size but
``locally'' an index may enter and exit the sets $I_+$ and $I_-$ multiple
times. This behavior further justifies the usage of the local tracking
algorithm while ruling out approximate algorithms that enforce monotonicity
of the sets $I_+$ and $I_-$.

\section{Application to n-gram Models} \label{ngram:sec}
In this section we demonstrate the potential of the maximum entropy relaxation
path for building an n-gram model. We would like to note though that
the goal of this section is to underscore the potential of the approach rather
than to obtain state-of-the-art results on a concrete benchmark.  We refer the
reader to \citep{chen1999empirical} for an introduction to n-gram
models. A context is a string $s=\omega_n \omega_{n-1}\ldots \omega_1$ where
each token $w_i$ is from to an alphabet $\Omega$. The tokens may be, for
example, words in a dictionary, phonemes or, characters. In an n-gram model,
the conditional probability $p(\cdot \, | \, \omega_n \ldots \omega_1)$ over
$\Omega$ is modelled using a variable-length Markov chain of fixed maximal
length \citep{RonSiTi96,buhlmann1999variable}. 

A convenient way to describe such models is by a suffix tree, whose nodes are
tokens, such that paths from root to leaf determine a context $\Omega$ (see
Figure~\ref{observed:subfig}). Modern n-gram models are typically trained
using a two-step procedure. First, a training buffer is used to count the
number of occurrences of tokens in any given context: define $c(\omega |
\omega_n \ldots \omega_1)$ to be the number of occurrences of the string
$\omega_n \ldots \omega_1 \omega$ in the training buffer. These counts can are
normalized to empirical distributions $q(\omega | \omega_n \ldots \omega_1)$
over $\Omega$ and described by a function on the suffix tree (see again
Figure~\ref{observed:subfig}). These empirical distributions are just the
maximum likelihood estimators for the underlying distributions on $\Omega$.  A
parameter estimation of an n-gram model faces a classical bias-variance
trade-off: each empirical distribution in a certain context is more accurate
but also has a higher estimation variance than the empirical distribution in
its shorter, parent context. To mitigate the variance of the estimates with
the increase of the context length a second step, known as a {\em smoothing}
of {\em back-off} is used. The second step is a specific recipe to combining or
interpolating empirical distributions from different levels along each path
from root to leaf on the suffix tree. The smoothing procedure often concludes
with a secondary subprocess, called pruning, in which contexts whose estimated
distribution is suspected to be inaccurate are removed from the final
estimated model (Figure~\ref{pruned:subfig}).
\begin{figure}[t] \label{tree:fig} \centering
	\subfigure[Observed]{\hspace{10mm}\psfig{figure=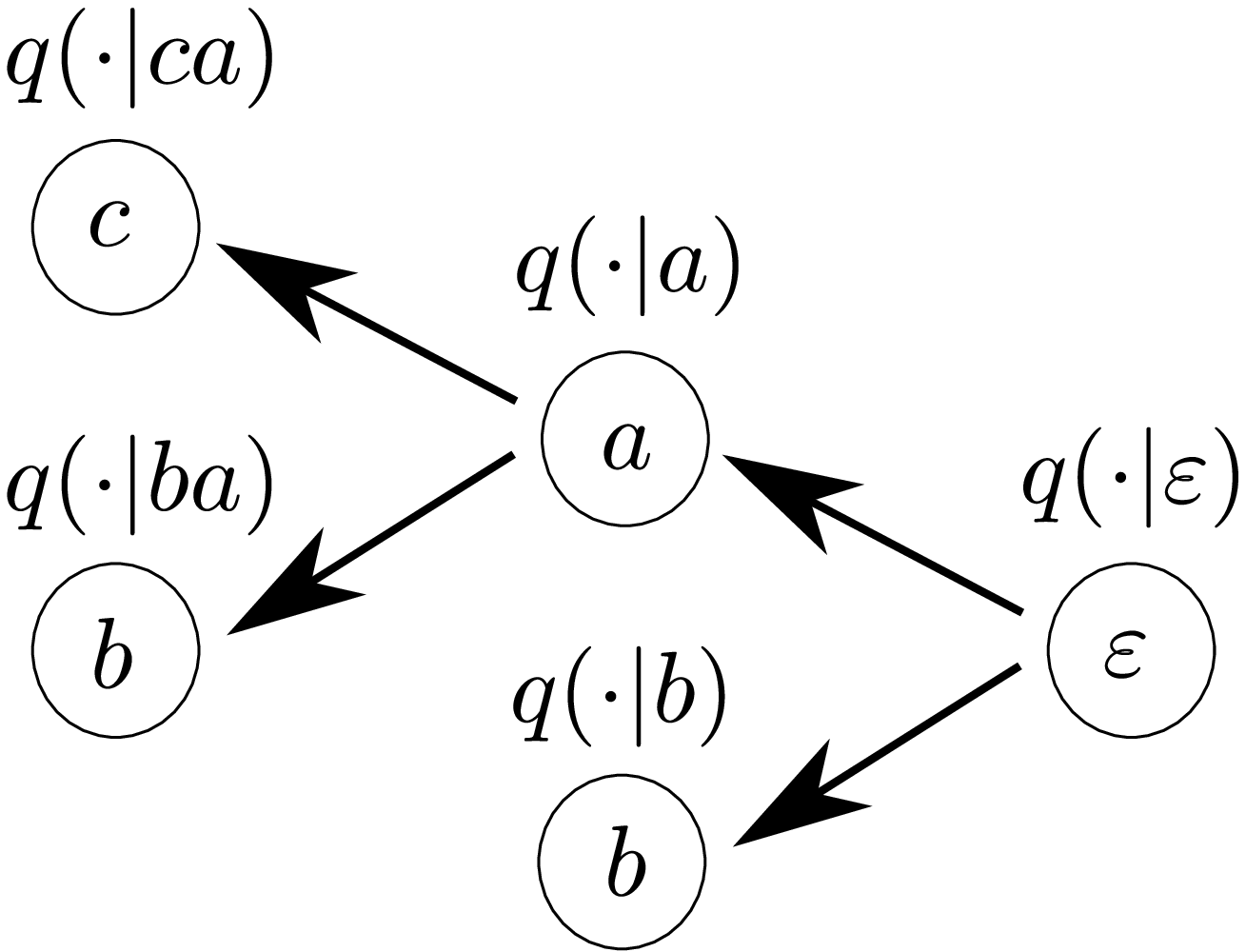,width=6cm}
	\label{observed:subfig}}
	\subfigure[Estimated and
	pruned]{\hspace{10mm}\psfig{figure=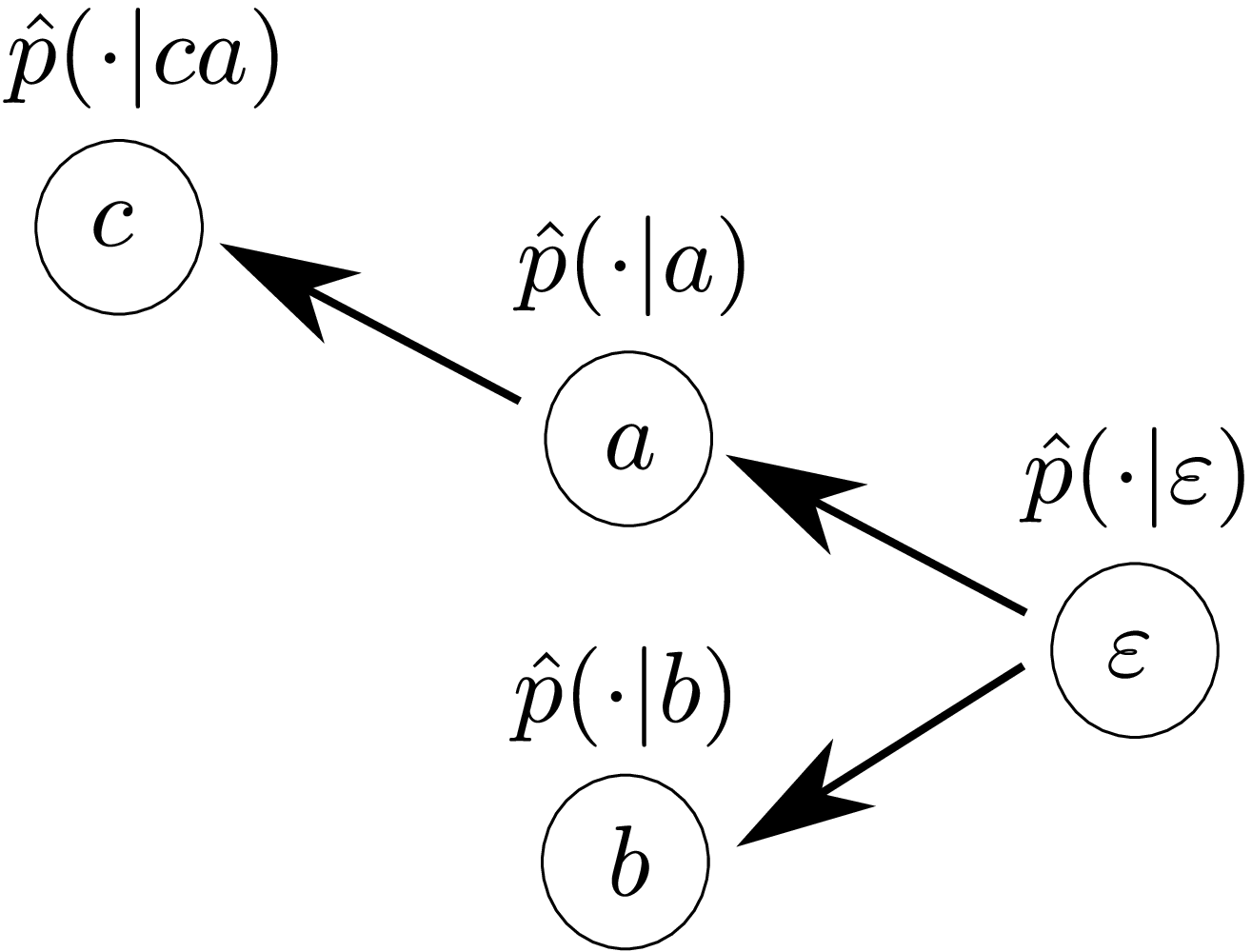,width=6cm}
	\label{pruned:subfig}} \caption{Estimation of an n-gram model represented by
	a suffix tree} \end{figure}

N-gram models for machine translation and speech recognition, where
the alphabet $\Omega$ is a significant subset of a natural language
dictionary, can get quite large in terms of number of
parameters~\citep{Brants07largelanguage}. In trying to reduce the storage and
retrieval burden imposed by prediction using a very large n-gram model,
several compressed representations of n-gram models in general, and
for those represented using suffix trees in particular, have been proposed,
see for instance~\citep{pauls2011faster}. Alternatively, instead of asking to
reduce the number of parameters by compacting an estimated model, one might
wish to leave out parameters, whose inclusion in the model does not give
sufficient improvement in performance. This idea leads to a procedure
known as entropy-based pruning \citep{stolcke2000entropy}.  In this section we
propose a new approach in which the estimation and pruning procedures are two
facets of the same process.  Instead of pruning a fully estimated n-gram
model, ideally we start with a given {\em parameter budget} and allocate
parameters from the budget in the most advantageous way, as measured by
likelihood of the model on a validation buffer. As we now show, the
regularized maximum entropy problem offers a natural approach to the parameter
allocation in an n-gram model.

\paragraph{Cascading relaxed maximum entropy problems.} A first step towards an
n-gram model based on relaxed maximum entropy problems is cascading a sequence
of maximum entropy problems, where each problem receives as its ``prior''
distribution the solution of the problem that precedes it. Assume that we have
a sequence of ``empirical'' distributions $\v{q}^{(1)} \dots \v{q}^{(s)}$ and
a ``prior'' distribution $\v{u}$ on $\Omega$.  We cascade maximum entropy
problems by taking, for the $i+1$-th problem, a specific solution
$\v{p}^{(i)}$ to the $i$-th problem as ``prior'' distribution  and
$\v{q}^{(i+1)}$ as ``empirical'' distribution. Formally, for $i\geq 1$ let
\begin{eqnarray*}
  \v{p}^{(i)} = \text{argmin}_{\v{p}\,\in\simplex}
  \sum_{j=1}^n p_j\log\!\left(\frac{p_j}{p^{(i-1)}_j}\right)
  ~ \mbox{ s.t. } ~ \|\v{p}-\v{q}^{(i)}\|_\infty \leq 1/\nu^{(i)}
  ~ ,
\end{eqnarray*}
where $\nu^{(i)}$ is our choice of relaxation parameter for the $i$-th
problem. It is convenient to write $\v{p}^{(0)}=\v{u}$ and $Z^{(0)}=1$. 
Denoting by $\v{\alpha}^{(i)}$ the dual solution to $\v{p}^{(i)}$, we have 
$$p^{(i)}_j = \frac{p^{(i-1)}_j e^{\alpha^{(i)}}}{Z^{(i)}} 
\mbox{  where  } Z^{(i)}=\sum_j p^{(i-1)}_j \exp\left(\alpha^{(i)} \right) ~ .$$
The $i$-th solution thus has the form
$$
p^{(i)}_j = \frac{ \exp\left( \sum_{k=0}^{i}\alpha_j^{(k)}
\right)}{\prod_{k=0}^{i}Z^{(k)}}\,.
$$
Consequently, to store the distributions $\v{p}^{(1)},\ldots,\v{p}^{(s)}$, we
need only store the dual solutions $\v{\alpha}^{(1)},\ldots,\v{\alpha}^{(s)}$,
which tend to be sparse, and the normalization factors
$Z^{(1)},\ldots,Z^{(s)}$.

\newcommand{\vcon}[2]{\v{#1}(\cdot | \omega_{#2} \ldots \omega_1)}
\newcommand{\con}[2]{#1(\omega | \omega_{#2} \ldots \omega_1)} 

\paragraph{Using the relaxation path for parameter allocation.} We now
describe a framework for parameter allocation and pruning in n-gram models. We
use cascading of relaxed maximum entropy problems along paths from the root to
leaf of a suffix tree, combined with the model selection procedure of
Sec.~\ref{cv:sec}.  To set the notation, let us write $\vcon{p}{n}$ for the
estimated probability distribution in the context $\omega_n \ldots \omega_1$.
This is a vector indexed by the token $\omega\in\Omega$, and we write
$\con{p}{n}$ for the value it takes at $\omega$.  Similarly, let $\vcon{q}{n}$
denote the empirical distribution vector as calculated over the training
buffer, let $\vcon{r}{n}$ be the empirical distribution vector in this context
as calculated over a validation buffer, and let $\v{p}(\cdot|\varepsilon)$
denote a given ``prior'' where $\varepsilon$ denotes the empty context.

Suppose that we have determined the probability distribution $\vcon{p}{n}$,
which our model uses for prediction in the context $\omega_n \ldots \omega_1$.
The task is to determine the distributions $\vcon{p}{n+1}$ in each of the
sub-contexts of length $n+1$. For each sub-context $\omega_{n+1} \ldots
\omega_1$, we solve for the entire relaxation path of the relaxed maximum
entropy with ``prior'' distribution $\vcon{p}{n}$ and ``observed''
distribution $\vcon{q}{n+1}$.  With the relaxation path available, we perform
the efficient cross validation procedure of Sec. \ref{cv:sec} against the
validation distribution $\vcon{r}{n+1}$, and obtain a list of options in the
form of Table \ref{tab:cv-list}.  For the sub-context $\omega_{n+1} \ldots
\omega_1$, write $\left( k^{\omega_{n+1}}, L(k^{\omega_{n+1}})\right)$ for the
option with model size $k^{\omega_{n+1}}$ and corresponding validation loss
$L(k^{\omega_{n+1}})$.  (The context  $\omega_n \ldots \omega_1$ is held fixed
and implicit in this notation.) For each sub-ontext, each option in the list
corresponds to a specific choice of prediction distribution in this context. To
choose one option out of each list, an allocation rule is applied as discussed
below.  Let $\kappa^{\omega_{n+1}}$ denote the option chosen by the allocation
rule for sub-context $\omega_{n+1} \ldots \omega_1$.  This results in a
specific prediction distribution $\vcon{p}{n+1}$ for each sub-context, and
decreases the total parameter budget by
$\sum_{\omega_{n+1}\in\Omega}\kappa^{\omega_{n+1}}$. The remaining parameter
budget is divided among the sub-contexts according to the allocation rule,
and the algorithm proceeds recursively on each sub-context that received an
allocation $\kappa^{\omega_{n+1}}>0$, each with its own budget.
The end result is a compact exponential representation of a recursive form,
\begin{eqnarray*}
\con{p}{n+1}
& \eqdef &
\frac{ \con{p}{n} exp\left( \con{\alpha}{n+1} \right)  }{\con{Z}{n+1}} \\
&=&  p(\omega|\varepsilon) \frac{exp\left(\sum_{i=1}^{n+1} \con{\alpha}{i}
\right)}{\prod_{i=1}^{n+1}\con{Z}{i} } \,,
\end{eqnarray*}
where $\con{Z}{n+1}$ is the required normalizing factor and
$\con{\alpha}{n+1}$ is the dual solution to $\con{p}{n+1}$. The sparsity of
$\vcon{\alpha}{n+1}$, namely the amount of parameters allocated to
specializing $\vcon{p}{n}$ to $\vcon{p}{n+1}$  is chosen to give the best
improvement in validation loss per parameter.  We can thus estimate and store
$\vcon{p}{n+1}$  not as a distribution in its own right, but as an exponential
tilt of the previously estimated distribution in the shorter context.

\paragraph{Parameter Allocation and Pruning.}
Unlike most well-known backoff or smoothing methods for training n-gram
models, the approach proposed here does not contain a separate pruning
procedure \citep{kneser1996statistical} that is executed after the model has
been trained in full. Instead, pruning happens naturally as part of parameter
allocation. If the allocation rule set $\kappa^{\omega_{n+1}}=0$ for the
sub-context $\omega_n+1 \ldots \omega_1$, then this sub-context (as well as
the sub-suffix-tree rooted at it) were pruned.  The allocation $\kappa^\omega$
to a certain sub-context $\omega \omega_1 \ldots \omega_n$ may be understood
as follows: according to the validation set, the benefit that can be expected
from letting the distribution $\vcon{p}{n+1}$ differ from $\vcon{p}{n}$ is not
worth allocating even a {\em single} parameter. We use a simple allocation
rule in our experiments. We fixed a maximal tree depth $N$ and a per-node
budget $b$. We determined the allocation of each sub-context independently of
the other sub-contexts. For sub-context $\omega_{n+1} \ldots \omega_1$, if
$N=n$, allocate $\kappa^{\omega_{n+1}}=0$.  Otherwise, choose the option
$k^{\omega_{n+1}}$ with most parameters such that $k^{\omega_{n+1}}\leq b$.
Namely, we allocated at most $b$ parameters at each node (single maximum
entropy problem). This allocation is rather rudimentary and its goal is mostly
to demonstrate the potential of the maximum entropy relaxation path.
\begin{figure}[t] \label{ngram-results:fig}
  \centering
  \subfigure[English]{\hspace{10mm}\psfig{figure=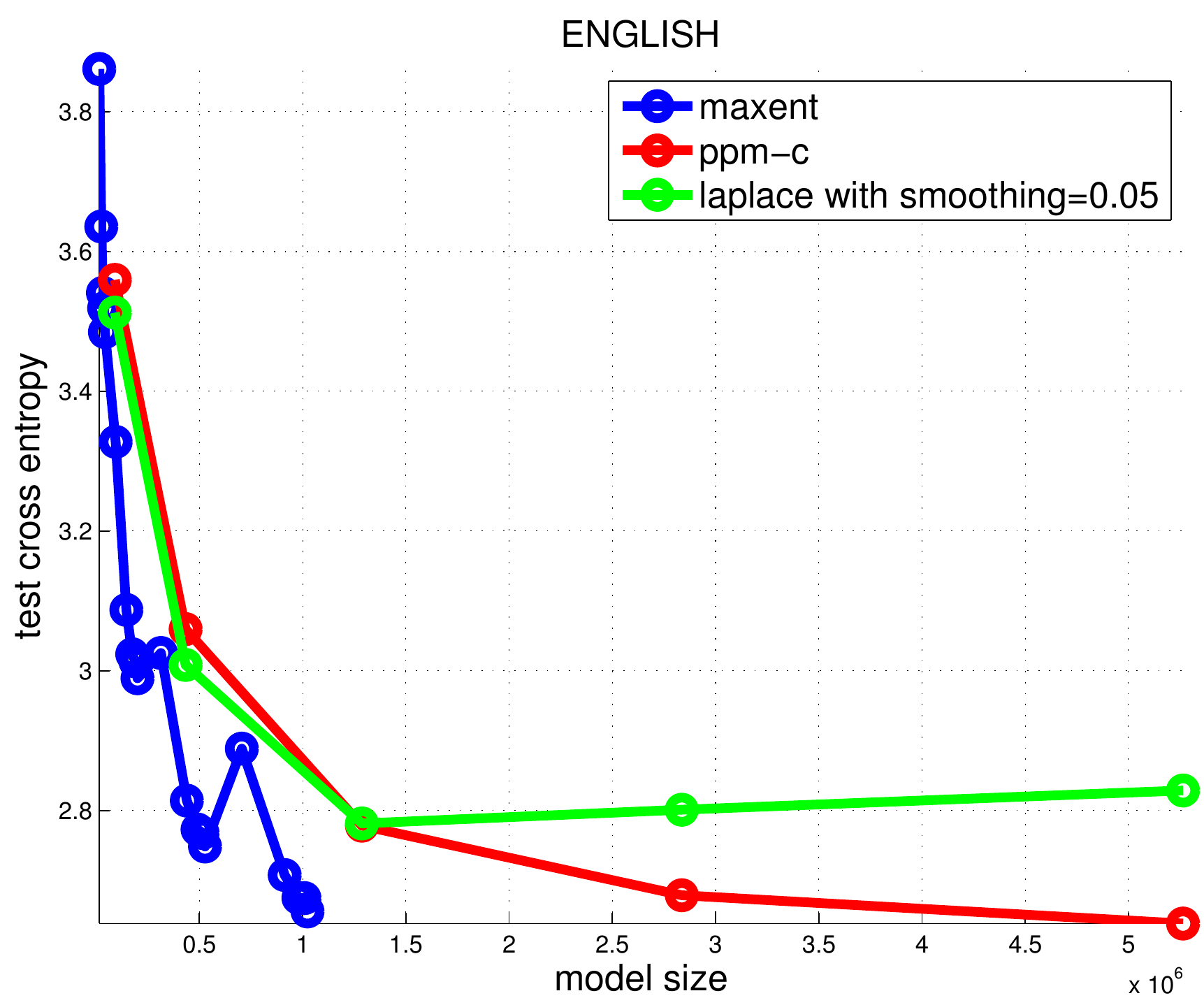,width=6cm}
  \label{english:subfig}}
  \subfigure[Arabic]{\hspace{10mm}\psfig{figure=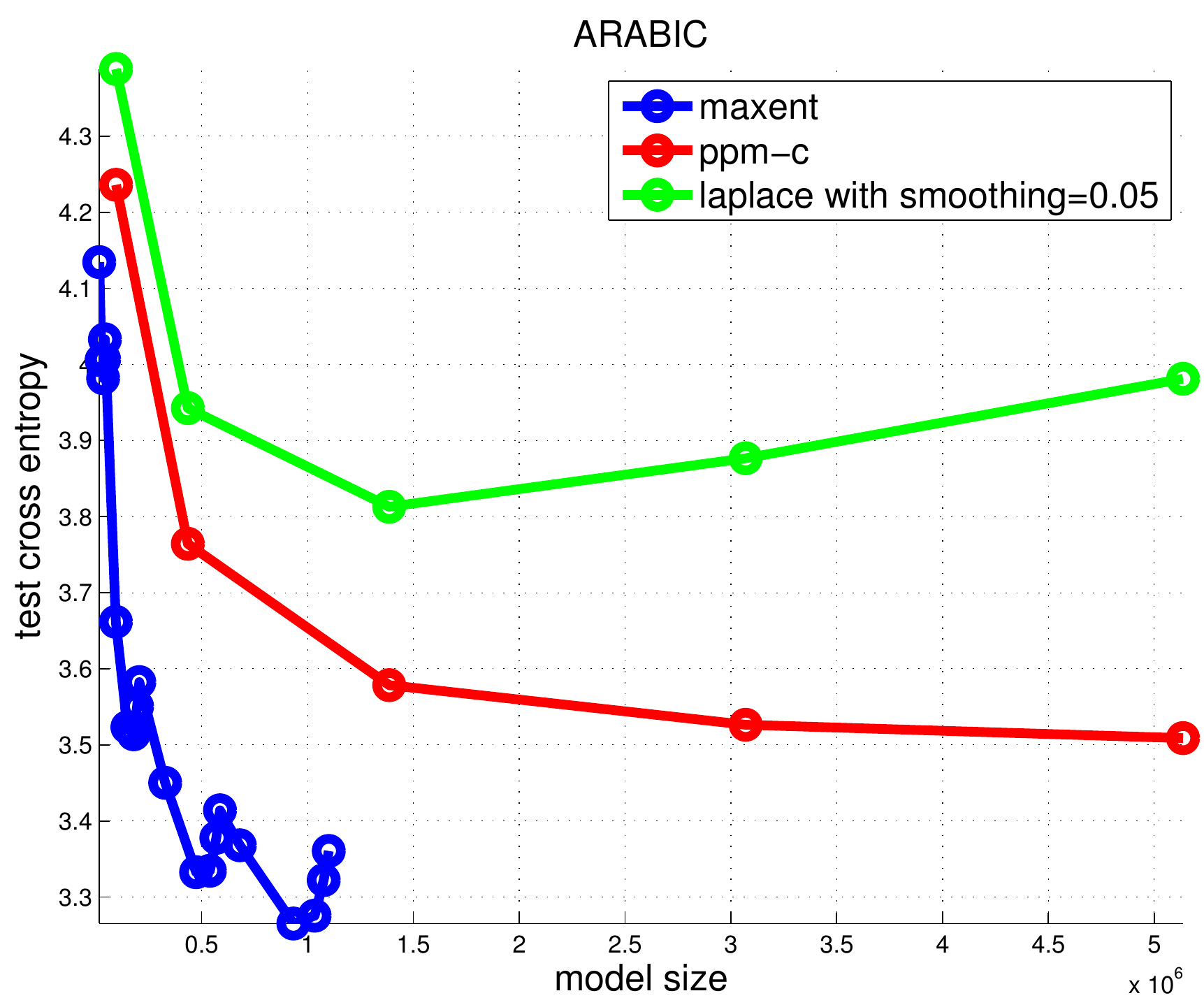,width=6cm}
  \label{arabic:subfig}}
  \subfigure[Hindi]{\hspace{10mm}\psfig{figure=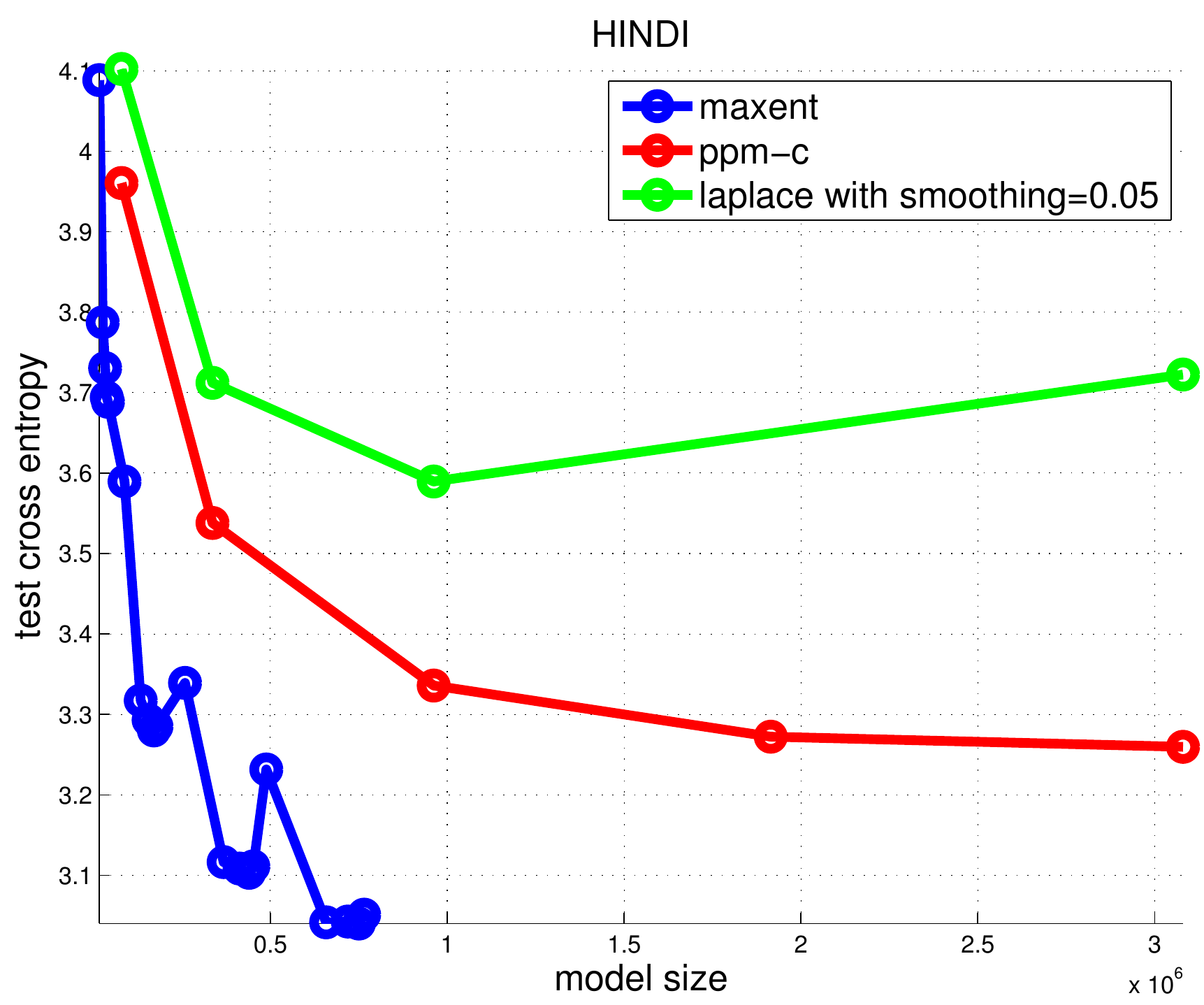,width=6cm}
  \label{hindi:subfig}}
  \subfigure[Russian]{\hspace{10mm}\psfig{figure=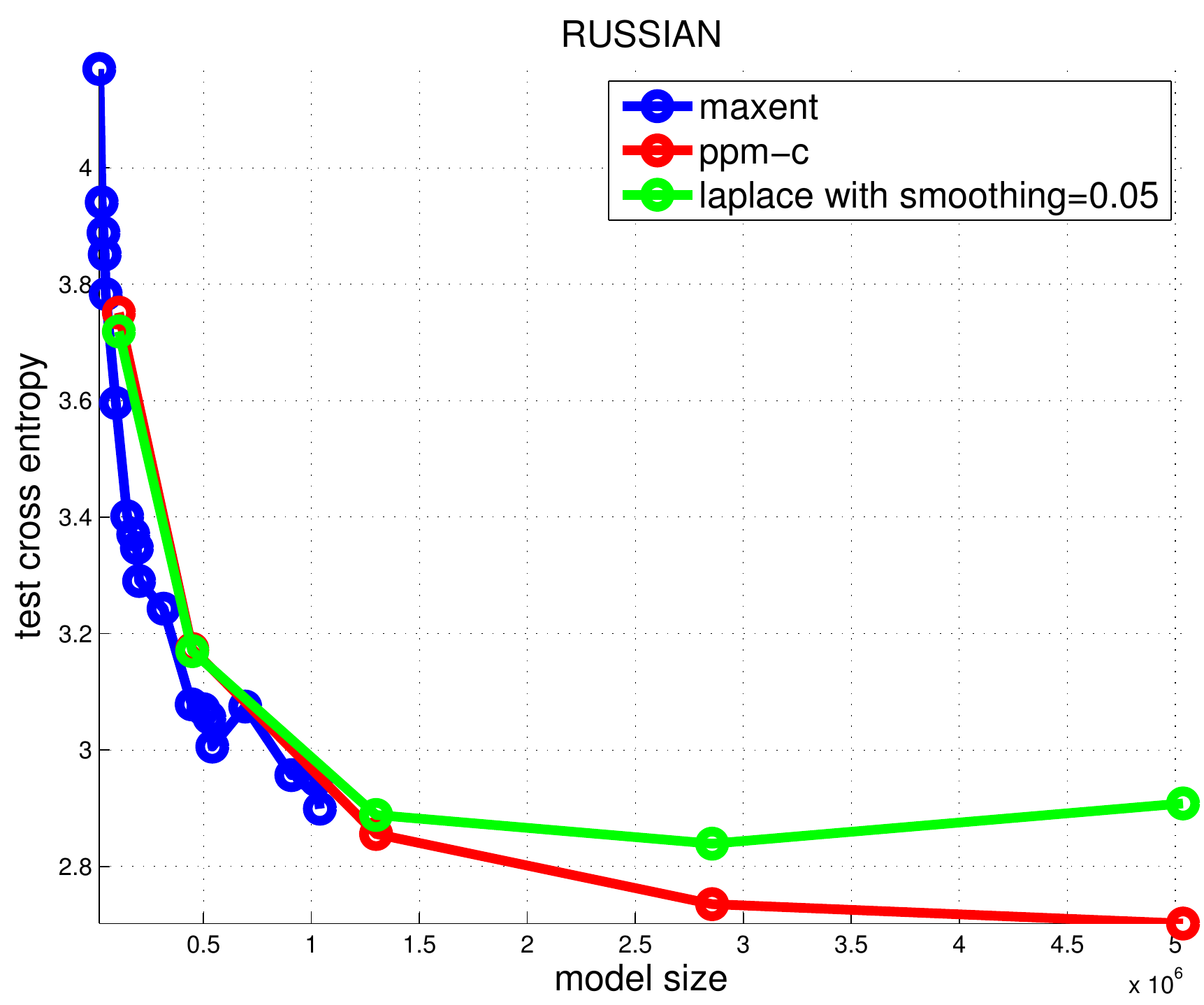,width=6cm}
  \label{russian:subfig}}
  \caption{Character-based n-gram models:
	Comparison of cross-entropy performance on unseen test buffer}
\end{figure}

\paragraph{Evaluation.} We have implemented our approach in a prototype
character-based n-gram
model. Such models, in which the alphabet $\Omega$
consists of a subset of the Unicode symbols, are commonly used in optical
character recognition (OCR) applications in conjunction with an image
processing module. The predicted probability provided by the n-gram model is
combined with an image content score in order to determine the identity of the
characters composing the image.  Depending on the language, the alphabet size
is on the order of thousands. The available data was a buffer of $40\cdot 10^6$ 
characters from several languages. The last $20\%$ of each buffer were held
out and used as the validation set. Each n-gram model was trained according to
the maximum entropy framework with the trivial allocation above. Different
model sizes were achieved by tuning the maximal tree depth and the maximum
parameter allocated to any individual maximum entropy problem. Performance of
each model was measured using cross entropy against a new test buffer with
approximately $1\cdot 10^6$  characters.  As a baseline, performance was compared
against a Laplace (or ``add-one'') smoothing model with a smoothing parameter
held fixed at $0.05$. Performance was compared against PPM-C
\citep{moffat1990implementing}, one of the current method of choice for
training character-based n-gram models. We capped the maximal depth of the
trees obtained by Laplace smoothing and PPM-C in order to control their model
size. The results are summarized in Figure \ref{ngram-results:fig}. The plots
show the trade-off of model size versus performance. We do not expect that the
method proposed here would outperform state of the art when the model size is
unlimited. Rather, we expect it to provide a favorable size versus performance
trade-off, particularly in the regime where model size is severely restricted.
Our results seem to support this hypothesis. Indeed, we see that especially
for Hindi and Arabic we can achieve low out of sample entropy for very small
model sizes.

\section{Extensions} \label{extensions:sec}

In the previous sections we focused on tracking algorithms for the important
case where the separable objective function is the relative entropy.  In this
section we discuss a simple but useful extension of the problem that builds on
the multiplicity vector $\v{m}$. In addition, we describe an adaptation of our
tracking algorithm for another setting where the master equation
$G(\nu,\mu)=0$ can be solved efficiently, namely the squared Euclidean
distance.

\paragraph{Coordinate-weighted relaxation.}
The observed distribution, as its name implies, is often a rational
distribution obtained through dividing the number of observations of each
event from the multinomial distribution by the total number of observations.
It is therefore desirable to impose different constraints on the differences
$|p_j-q_j|$ so as to accommodate the number of observations. For instance,
when $q_j=0$, we simply have not observed any events corresponding to the
$j$-th outcome. In this case, it is typically desirable to enforce a stricter
difference on $|p_j-q_j|$. Formally, we would like to associate an
a priori accuracy parameter $\delta_j$ such that
\begin{equation} \label{general_linf:eqn}
\forall j\in[n]: ~ |p_j-q_j|\leq \frac{\delta_j}{\nu} ~ ~ ,
\end{equation}
and perform tracking of all admissible values of $\nu$ as before.  As we
now show, we can use the multiplicity vector in place of $\v{\delta}$ to
handle this setting as long as $\v{q}$ and $\v{u}$ are in the simplex (without
multiplicity). Note while we motivated $\v{m}$ as a multiplicity vector, the
sole requirement we placed on $\v{m}$ is positivity, namely, $m_j>0$ for all
$j\in[n]$. We next set $\v{m}=\v{\delta}$ and define $\tilde{p}_j=p_j/m_j$,
$\tilde{q}_j=q_j/m_j$, $\tilde{u}_j=u_j/m_j$. The resulting problem becomes,
$$
\min_{\v{\tilde{p}}\,\in\simplex(\v{m})}
\sum_{j=1}^n m_j \tilde{p}_j\log\left(\frac{\tilde{p}_j}{\tilde{u}_j}\right)
~ \mbox{ s.t. } ~ \|\v{\tilde{p}}-\v{\tilde{q}}\|_\infty \leq 1/\nu \,,
$$
where $\v{\tilde{q}},\v{\tilde{u}}\in\simplex(\v{m}) $.
Clearly, the above problem is of exactly the same form
of~(\ref{relmaxent:eqn}). To recap, by setting the multiplicity $\v{m}$ to be
the a-priori relaxation vector $\v{\delta}$ we can straightforwardly
accommodate constraints of the form~(\ref{general_linf:eqn}) while performing
relaxation path tracking.

\paragraph{Relaxation Path for Square Loss.}
We next focus on examining a different choice for
$\v{\phi}(\cdot)$ and review the required changes to the core of the
tracking procedure. The objective function we examine is the squared
Euclidean distance between $\v{p}$ and $\v{u}$. The optimization problem is 
\begin{equation} \label{squared_diff_opt:eqn}
  \min_{\v{p}\,\in\simplex(\v{m})} \half \sum_{j=1}^n (p_j - u_j)^2
  ~ \mbox{ s.t. } ~ \|\v{p}-\v{q}\|_\infty \leq 1/\nu \,
\end{equation}
where $  \v{q},\v{u}\in\simplex(\v{m}) $.
This problem is concerned with projecting $\v{u}$ into the intersection of the
$\ell_1$ and $\ell_\infty$ polytopes. Note that by modifying $\v{m}$ we can
control the size of the $\ell_1$ ball in which $\v{p}$ should reside.
In the interest of simplicity we now assume $\v{m}\equiv 1$. The
objective function in this problem is clearly of the form given by
(\ref{separable_obj:eqn}) with $\phi_j(p_j) = \half (p_j - u_j)^2$. Recalling
the construction from Sec.~\ref{separable_homotopy:sec}, we need to calculate
$d\phi_j(p_j)/dp_j$ and find its inverse. Since
$$\frac{d\phi_j(p_j)}{dp_j} \; = \; p_j-u_j$$
the inverse function $\psi_j$ must satisfy
$$\psi_j(p_j-u_j) = p_j ~ ~ .$$ Denoting $\eta=p_j-u_j$ we get that
$\psi_j(\eta) = u_j + \eta$. Next, we can expand (\ref{ps2:eqn}) and write
$$
p_j = q_j + \frac{1}{\nu}\,\capping\left(\nu(u_j + \eta) - \nu q_j\right) ~ ~ .
$$
We define $\mu = \nu\eta$ and get that the dependency of $\v{p}$ is
piece-wise linear in $(\nu,\mu)$ as follows,
$$
p_j = q_j + \frac{1}{\nu}\,\capping\left(\nu(u_j -q_j) + \mu\right) ~ ~ .
$$
We now define $\s{M}$ as before and introduce the following definitions,
$$
\s{R} = \sum_{j\in I_0} u_j - q_j  ~ , ~ \s{B} = |I_0| ~  ~ .
$$
Then, the requirement that $G(\nu,\mu)=0$ yields = ANSWER: FIXED
the following linear
equation,
$$
\nu\s{R} + \mu\s{B} + \s{M} = 0 ~ ~ ,
$$
which forms the equation for the line $\ell_0$. We can now repeat the tracking
procedure almost verbatim and perform projections onto the intersection of the
$\ell_1$ ball with the hypercube of length $1/\nu$ centered at $\v{q}$ for all
possible values of $\nu$. This solution is an entire relaxation path
generalization of the projection procedure onto the $\ell_{1,\infty}$ polytopes
as described in~\citep{QuattoniCaCoDa09}.

%

\section*{Acknowledgments}
We thank John Duchi, Sally Goldman, Han Liu, and Fernando Pereira for valuable
feedback.
Thanks to John Blitzer for comments and
suggestions on the final manuscript. MG was partially supported by a William R. and Sara
Hart Kimball Stanford Graduate Fellowship and conducted this work while at
Google Research.

\appendix

\section{Global Homotopy Tracking} \label{global_homotopy:sec}
The local homotopy tracking algorithm repeatedly calculates the intersection
of $\lll_0$ with {\em all} of the lines $\lll_{\pm j}$, regardless of their
orientation. Thus, computing the intersections of $\lll_0$ with the lines
$\lll_{\pm j}$ requires $O(n)$ operations. In this section we present an
alternative algorithm that maintains a global view of all of the lines
including the line $\lll_0$. The global homotopy tracking requires though a
more complex data structure, namely a priority queue~\citep{CormenLeRiSt01},
that facilitates insertions and deletions in $O(\log(n))$ time and finding the
smallest element in the data structure in constant time.

The global algorithm scans the $(\nu,\mu)$ plane left to right, starting with
$\nu=0$, while maintaining the intersections of the lines $\lll_{\pm j}$ and
$\lll_0$ with a vertical line placed at $\nu$. We denote by $\nll_\nu$ the
vertical line located at $\nu$.  Throughout the scanning process we maintain a
priority queue for reasons that are explained shortly. We denote by $\v{\chi}$
the vector which records the order in which the lines $\lll_{\pm j}$ {\em and}
$\lll_0$ intersect the horizontal line $\nll_\nu$ (see also
Fig.~10). Formally,
the vector $\v{\chi}$ records for a given $\nu$ the vertical intersections
such that $\mu_{\chi_0} \leq \mu_{\chi_2} \leq \ldots \leq \mu_{2n}$ where $$
(\nu, \mu_{j}) = \lll_j \cap \nll_\nu ~ ~ \mbox{ for } -n\le j\le n ~ ~ . $$
Note that the line $\lll_0$ which defines the solution is treated as any other
line $\lll_j$ for $1\le |j|\le n$. Naturally, our procedure would take a few
additional steps when the scanning process is concerned with $\lll_0$, as we
describe in the sequel.

\begin{figure}[t]
	\begin{center}
  {\epsfig{figure=global_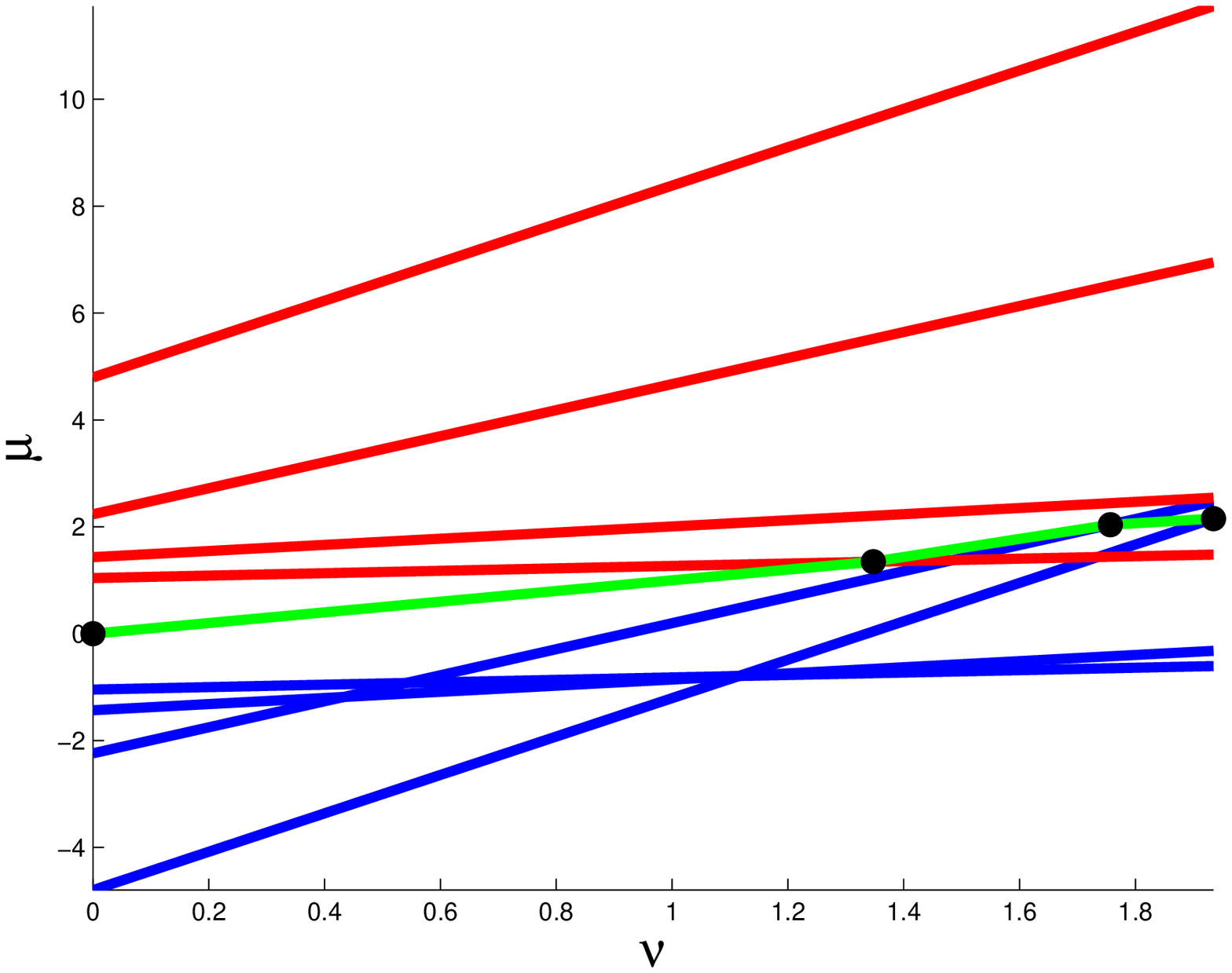,height=6cm}}
	\end{center}
	\caption{Illustration of the line scanning process during a single iteration
	of the global homotopy tracking.}
  \label{scanglob:fig}
\end{figure}

The definition of $\v{\chi}$ implies that adjacent lines on $\nll_\nu$
intersect at
$$ (\tilde{\nu}_i, \tilde{\mu}_i) =
  \lll_{\chi_{i-1}} \cap \lll_{\chi_{i}} ~ ~ , $$
where $1\le i\le 2n$. We use a priority queue to keep track of future
intersections. The intersections where $\tilde{\nu}_i\le\nu$ have
already been observed and no longer reside in the priority queue.
The pairs $(\tilde{\nu}_i, i)$ where $\tilde{\nu}_i>\nu$
are arranged in the priority queue in an increasing order,
so that the smallest $\tilde{\nu}_i>\nu$ is at the front of the priority queue.
For each element $\tilde{\nu}_i$ we also keep the index $i$ in order to
retrieve the line the intersection corresponds to in a constant time.
During the scanning process we also maintain the variables
$\s{M},\s{U},\s{Q}$ which describe the current segment of $\lll_0$.
We start at $\nu=0$, $\s{M}=0$, $\s{U}=\s{Q}=1$ and populate the priority
queue. The tracking process involves the following iterates.
We pull out the minimal pair $(\tilde{\nu}_i, i)$ from the priority queue.
The value $\tilde{\nu}_i$ corresponds to the intersection of the line
$\lll_{\chi_{i-1}}$ and $\lll_{\chi_i}$.

Let us examine first the case
where both $\chi_i\neq 0$ and $\chi_{i-1}\neq 0$. In this case the two
lines simply switch their position on the scan line $\nll_\nu$ as $\nu$
passes through $\tilde{\nu}_i$. We thus need
to swap $\chi_i$ and $\chi_{i-1}$ in $\v{\chi}$. Moreover, the
intersection $\tilde{\nu}_{i-1}$ and $\tilde{\nu}_{i+1}$ become outdated due
to the swap in positions. We
locate their indices using the back-pointers on the priority queue and take
them out of the priority queue. (Note that neither of them must reside in the
queue since we might have encountered these values earlier during the scanning
process.) Since the lines changed their order, there might be newly formed
intersections looming ahead. These intersections,
$\tilde{\nu}_{i-1}$ and $\tilde{\nu}_{i+1}$, are computed and added to
the queue if they are larger than $\tilde{\nu}_i$. In Fig.~6 we illustrate
the process of updating the priority queue when processing the current
element at the top of the queue.

When either $\chi_i=0$ or $\chi_{i-1}=0$
extra measures need to be taken since we encounter an intersection of the
line $\lll_0$ with one of the lines $\lll_{\pm j}$ for $j\ge 1$. First, we
add a new line segment to $\mu(\nu)$ originating at
$(\tilde{\nu}_i,\tilde{\mu}_i)$. We next update $\s{M},\s{U},\s{Q}$ as
required by the intersection in the same way the update is performed when
conducting the local homotopy tracking. Now that we have the updated slope
of $\lll_0$ available, we can proceed and calculate potential new intersections
with the neighboring (above and below) lines and add them to the
priority queue in case their value is larger than $\tilde{\nu}_i$.

The global homotopy tracking finishes when the queue is empty. It requires
$O(n)$ storage and $O(n^2\log(n))$ operations since each insertion and
deletion from the queue can be performed in $O(\log(n))$ steps. Clearly,
if the number of line segments constituting $\mu(\nu)$ is greater than
$n\log(n)$ (recall that the upper bound is $O(n^2)$) then the global
homotopy procedure is faster than the local one.

\paragraph{Example} Since the global tracking process is somewhat complex,
we now give a concrete example which illustrates the process.
We examine the following toy problem setting,
$$\v{u}=\left(\frac{1}{2}, \frac{1}{8}, \frac{1}{12}\right) ~ ~ , ~ ~
  \v{q}=\left(\frac{1}{4}, \frac{1}{3}, \frac{1}{36}\right) ~ ~ , ~ ~
  \v{m}=\left(1, 2, 3\right) ~ ~ . $$
At the start we set $\nu=0$, $\s{M}=0$, $\s{U}=\s{Q}=1$, and
$\v{\chi}=(-3,-2,-1,0,1,2,3)$. The queue contains the following pairs
of values and back-indices,
$$\left({12}/{7}, 6\right) ~ ~ , ~ ~
  \left({36}/{13},2\right) ~ ~ , ~ ~
  \left(4,4\right) ~ ~ . $$

\noindent
Next we pull $({12}/{7},6)$ from the front of the queue and set
$\nu\leftarrow{12}/{7}$. We reorder $\v{\chi}$ and set
$\v{\chi}\leftarrow(-3,-2,-1,0,1,3,2)$. Last for this intersection point,
we remove $({12}/{7},6)$ and add a newly found intersection, so that the
priority queue becomes,
$$({36}/{13},2) ~ ~ , ~ ~ (4,4) ~ ~ , ~ ~ (60,5) ~ ~ .$$

\noindent
We next pull $({36}/{13}, 2)$ out of the queue, thus setting
$\nu\leftarrow{36}/{13}$. The last removal results in the following line
ordering $\v{\chi}\leftarrow(-3,-1,-2,0,1,3,2)$. We also identify a new
line intersection and the queue becomes,
$$(4,4) ~ ~ , ~ ~ ({24}/{5},3) ~ ~ , ~ ~ (60,5) ~ ~ .$$

\noindent
We then remove $(4,4)$ from the front of the queue, setting
$\nu\leftarrow 4$, the line ordering
$\v{\chi}\leftarrow(-3,-1,-2,1,0,3,2)$, and the queue
to contain the following pairs
$$({60}/{13},3) ~ ~ , ~ ~ (15,5) ~ ~ .$$
One of the lines at the current intersection ($\nu=4$) is $\lll_0$ we thus
perform the update
$$\s{M}\leftarrow1 ~ ~ , ~ ~
  \s{U}\leftarrow\frac{1}{2} ~ ~ , ~ ~
  \s{Q}\leftarrow\frac{3}{4} ~ ~ . $$
The global homotopy tracking continues the above schemes and goes through
numerous more line intersections.

\end{document}